\newtheorem{theorem}{Theorem}
\pgfplotsset{compat=1.17}
\newtheorem{lemma}{Lemma}
\pgfplotsset{compat=1.17}
\newtheorem{proposition}{Proposition}
\title{Learning What to Remember: Adaptive Probabilistic Memory Retention for Memory-Efficient Language Models\\ \large A Probabilistic Framework for Memory-Constrained Language Modeling}
\author{S M Rafiuddin \\
  Department of Computer Science \\
  Oklahoma State University \\
  Stillwater, OK, USA \\
  \texttt{srafiud@okstate.edu} \\\And
  Muntaha Nujat Khan \\
  Department of English \\
  Oklahoma State University \\
Stillwater, OK, USA \\
  \texttt{munkhan@okstate.edu} \\}
\begin{document}
\maketitle


\begin{abstract}
Transformer attention scales quadratically with sequence length $O(n^2)$, 
limiting long-context use. We propose \emph{Adaptive Retention}, a probabilistic, layer-wise token 
selection mechanism that learns which representations to keep under a strict global budget $M$. 
Retention is modeled with Bernoulli gates trained via a Hard-Concrete/variational relaxation and 
enforced with a simple top-$M$ rule at inference, making the method differentiable and drop-in for 
standard encoders. Across classification, extractive QA, and long-document summarization, keeping only $30$--$50\%$ of tokens preserves $\geq 95\%$ of full-model performance while cutting peak memory by 
$\sim 35$--$45\%$ and improving throughput by up to $\sim 1.8\times$. This architecture-agnostic 
approach delivers practical long-context efficiency without modifying base attention or task heads.\footnote{Accepted at EMNLP 2025 Findings as a short paper.}

\end{abstract}

\section{Introduction}

Transformer-based language models have achieved remarkable success across a wide range of NLP tasks \cite{vaswani2017attention,devlin2019bert,radford2019language,brown2020language}, but their memory requirements grow quadratically with sequence length, posing significant challenges for long‑context processing \cite{tay2022efficient}. To address these limitations, various approaches have explored sparse attention patterns to reduce computational and memory overhead \cite{beltagy2020longformer,zaheer2020big,child2019generating,dao2023flashattention} and compressed memory representations to extend effective context lengths \cite{rae2020compressive,wu2022memorizing,kimcompressed}. Recent work has further proposed dynamic memory and computation strategies, such as internet‑scale memory compression \cite{zemlyanskiy2024memory}, token merging for efficient inference \cite{bolya2023token}, and streaming memory management \cite{xiao2024efficient}. However, existing methods often either rely on architectural modifications or apply fixed heuristics without explicitly modeling adaptive token retention under strict memory budgets. Inspired by advances in adaptive computation time \cite{graves2016adaptive,xin2020deebert,liu2020fastbert} and improvements in selective attention \cite{leviathan2024selective}, our approach formulates memory retention as a probabilistic learning problem, enabling language models to dynamically retain the most informative token representations while respecting a global memory constraint. Through this formulation, our approach achieves significant memory savings with minimal performance degradation, offering a flexible and efficient solution applicable to standard transformer architectures.

\begin{figure*}[ht]
  \centering
  \includegraphics[width=\textwidth]{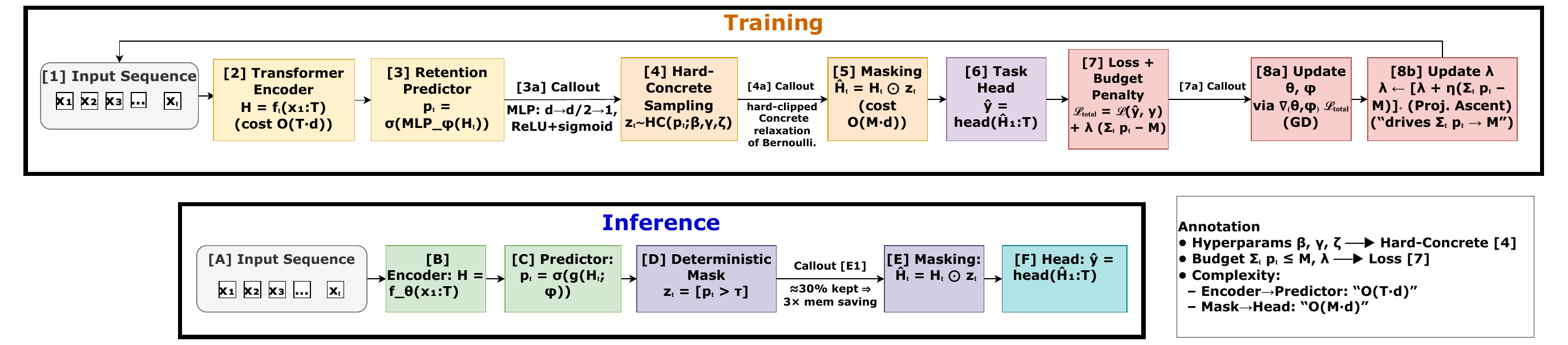}
\caption{\textbf{Adaptive Retention: layer-wise probabilistic token selection.}
At each Transformer block, a lightweight gated scorer produces per-token probabilities trained with a Hard–Concrete relaxation.
At inference, we keep the top-$M_l$ tokens per layer ($M_l=\lfloor \rho T_l \rfloor$), forwarding only those to the next block.
The active sequence length shrinks with depth, yielding cumulative compute and memory savings while leaving base attention unchanged.
Symbols: $H^{l}$ token states; $s^{l}$ scores; $p^{l}$ probabilities; $\rho$ target ratio; $M_l$ retained count.}

  \label{fig:adaptive-retention}
\end{figure*}

\section{Related Work}

\noindent\textbf{Memory‐Efficient Self‐Attention.}  
The quadratic memory cost of Transformer attention \cite{vaswani2017attention} has prompted variants such as Reformer’s locality‐sensitive hashing achieving \(O(L\log L)\) \cite{kitaevreformer}, Linformer’s low‐rank projections for \(O(L)\) \cite{wang2020linformer}, and Performer’s random‐feature approximations \cite{choromanskirethinking}. Sparse patterns appear in Longformer and BigBird \cite{beltagy2020longformer,zaheer2020big}, and FlashAttention‑2 optimizes GPU memory access and parallelism \cite{dao2023flashattention}.

\noindent\textbf{Memory Compression and External Stores.}  
Compressive Transformer compresses past activations into a secondary memory bank \cite{rae2020compressive}, while Memorizing Transformers add an explicit key–value store \cite{wu2022memorizing}. Compact summaries are learned in CCM \cite{kimcompressed}, and methods like MEMORY‑VQ \cite{zemlyanskiy2024memory} and GLIMMER \cite{de2023glimmer} apply quantization or late‐interaction retrieval.

\paragraph{Token pruning vs.\ KV-cache compression.}
Beyond learned or heuristic encoder-side pruning (e.g., H2O’s fixed heavy-hitter oracle~\cite{h2o}), a parallel line compresses the \emph{decoder-side} key–value (KV) cache for autoregressive generation, such as SnapKV~\cite{snapkv} and PyramidKV~\cite{pyramidkv}. These methods reduce memory/latency during \emph{stepwise} decoding by selecting or merging past states, whereas our approach targets \emph{encoder} representations and shrinks the active token set \emph{within} the stack under a global budget. Consequently, direct apples-to-apples benchmarking is non-trivial: KV-cache compression operates in a causal, decode-time regime with different bottlenecks, while our method improves long-context \emph{encoding} efficiency without modifying attention patterns. We therefore compare H2O empirically on encoder tasks and discuss SnapKV/PyramidKV qualitatively as complementary techniques for generative inference.

\noindent\textbf{Adaptive Computation and Token Reduction.}  
Adaptive Computation Time (ACT) varies processing steps per token \cite{graves2016adaptive}, and early‐exit models such as DeeBERT and FastBERT skip layers dynamically \cite{xin2020deebert,liu2020fastbert}. Token Merging fuses similar representations \cite{bolya2023token}, and streaming attention sinks manage context retention for low‐latency long‐sequence processing \cite{xiao2024efficient}.

Our approach differs by learning probabilistic token retention under a strict global memory budget, enabling end‑to‑end optimization of which hidden states to store without altering core Transformer architectures.

\section{Method}

\subsection{Problem Formulation}

Let \(\mathbf{X}=(\mathbf{x}_1,\dots,\mathbf{x}_T)\) be the input sequence and let the Transformer encoder produce hidden states \(\mathbf{H}=(\mathbf{h}_1,\dots,\mathbf{h}_T)\), \(\mathbf{h}_t\in\mathbb{R}^d\). We introduce binary retention indicators \(\mathbf{z}=(z_1,\dots,z_T)\in\{0,1\}^T\), defined by
\begin{equation}\label{eq:zt}
z_t = 
\begin{cases}
1, & \text{if }\mathbf{h}_t\text{ is retained}\\
0, & \text{otherwise}
\end{cases}
\end{equation}
and denote the masked sequence \(\mathbf{H}\odot\mathbf{z}=(\mathbf{h}_1z_1,\dots,\mathbf{h}_Tz_T)\). Our goal is to learn both model parameters \(\boldsymbol{\theta}\) and retention probabilities \(\mathbf{p}=(p_1,\dots,p_T)\), where \(p_t = \Pr[z_t=1]\), by solving
\begingroup\small
\begin{equation}\label{eq:objective}
\min_{\boldsymbol{\theta},\mathbf{p}}\mathbb{E}_{\mathbf{z}\sim\mathrm{Bernoulli}(\mathbf{p})}\bigl[\mathcal{L}\bigl(f(\mathbf{H}\odot\mathbf{z};\boldsymbol{\theta})\bigr)\bigr]:\sum_{t=1}^T p_t\le M
\end{equation}
\endgroup

where \(M\) is the total memory budget (the maximum expected number of retained tokens). At inference, we deterministically retain the top‑\(M\) tokens by their retention probabilities \(\{p_t\}\).

Here, \(f\) is the task‑specific decoder and \(\mathcal{L}\) the loss; at inference one may further enforce the hard constraint \(\sum_{t=1}^T z_t \le M\).

\subsection{Probabilistic Retention Model}

We introduce a lightweight summary state \(\mathbf{m}_t\in\mathbb{R}^d\) and compute context‐aware retention probabilities via a gated scoring network:
\begin{equation}\label{eq:ret_model}
\begin{aligned}
\mathbf{m}_t &= \gamma\,\mathbf{m}_{t-1} + (1-\gamma)\,\mathbf{h}_t,\quad \mathbf{m}_0=\mathbf{0}\\
s_t &= \mathbf{v}^\top\tanh\bigl(\mathbf{W}\,\mathbf{h}_t + \mathbf{U}\,\mathbf{m}_{t-1}\bigr) + b\\
p_t &= \sigma(s_t),\quad
z_t \sim \mathrm{Bernoulli}(p_t)
\end{aligned}
\end{equation}
where \(\gamma\in[0,1]\) controls the summary decay, \(\sigma(x)=(1+e^{-x})^{-1}\), and \(\{\mathbf{W},\mathbf{U},\mathbf{v},b\}\) are learned parameters. This formulation captures both \emph{local} (\(h_t\)) and \emph{global} (\(m_{t-1}\)) context in the retention decision.

\subsection{Optimization Objective}

To enforce the budget in \eqref{eq:objective} we introduce a Lagrange multiplier \(\lambda\ge0\) and form the saddle‐point problem:
\begin{equation}\label{eq:lag_saddle}
\max_{\lambda\ge0}\;\min_{\boldsymbol{\theta},\mathbf{p}}
\;\mathcal{L}_\lambda(\boldsymbol{\theta},\mathbf{p})
\end{equation}
where the Lagrangian is defined as
\begingroup\small
\begin{equation}\label{eq:lag_lagrangian}
\begin{split}
\mathcal{L}_\lambda(\boldsymbol{\theta},\mathbf{p})
&= \mathbb{E}_{\mathbf{z}\sim\mathrm{Bernoulli}(\mathbf{p})}
    \bigl[\mathcal{L}\bigl(f(\mathbf{H}\odot\mathbf{z};\boldsymbol{\theta})\bigr)\bigr]\\
&\quad + \lambda\Bigl(\sum_{t=1}^T p_t - M\Bigr)
\end{split}
\end{equation}

\endgroup

We optimize by alternating stochastic gradient descent on \((\boldsymbol{\theta},\mathbf{p})\) and projected gradient ascent on \(\lambda\), i.e.\ 
\(\lambda\leftarrow\max\{0,\;\lambda+\eta(\sum_t p_t-M)\}\) at each iteration \cite{boyd2004convex}.

\subsection{Variational Relaxation}

Direct backpropagation through the discrete sampling in \eqref{eq:objective} is not possible, so we employ the Hard Concrete reparameterization \cite{maddison2016concrete,louizos2017learning}.  

\begingroup\small
\begin{equation}\label{eq:hard_concrete}
\begin{split}
\alpha_t &= \exp(s_t), \quad u \sim \mathcal{U}(0,1)\\
\tilde z_t
&= \mathrm{Clamp}_{[0,1]}\Bigl(\,
    \sigma\bigl(
      \tfrac{\log\alpha_t + \log u - \log(1-u)}{\beta}
    \bigr)\,
    (\zeta - \gamma)
    + \gamma
\Bigr)
\end{split}
\end{equation}

\endgroup

with temperature \(\beta>0\) and stretch parameters \(\gamma<0<1<\zeta\).  During training we replace \(z_t\) by \(\tilde z_t\) in both the expectation and the Lagrangian \(\mathcal{L}_\lambda\), yielding low‐variance gradient estimates via the standard reparameterization trick.  

\subsection{Inference Strategy}

At test time we compute retention scores \(p_t\) via \eqref{eq:ret_model} and then deterministically select the top‑\(M\) tokens by setting
\begingroup\small
\begin{equation}\label{eq:inference_strategy}
\phi = \text{the \(M\)-th largest of }\{p_t\}_{t=1}^T,\quad
z_t^* = \mathbf{1}\{p_t \ge \phi\}
\end{equation}
\endgroup

so that \(\sum_{t=1}^T z_t^* = M\).  The encoder outputs are then masked as \(\mathbf{H}\odot\mathbf{z}^*\) and passed to \(f(\cdot;\boldsymbol{\theta})\).  


\begin{table*}[ht]
\centering
\scriptsize
\caption{Results at \textbf{50\%} token retention on SST-2, IMDb, ArXiv (R1), QASPER (EM/F1), PubMed (R1/RL), and CUAD (micro/macro) across model groups: dense/no retention, learned/heuristic pruning (incl.\ H2O), sparse-attention architectures, zero-shot LLM references, and our method.}
\resizebox{\textwidth}{!}{%
\begin{tabular}{lccccccc}
\toprule
\textbf{Model} & \textbf{Params} 
  & \textbf{SST-2} 
  & \textbf{IMDb} 
  & \textbf{ArXiv (R1)} 
  & \textbf{QASPER (EM/F1)} 
  & \textbf{PubMed (R1/RL)} 
  & \textbf{CUAD (micro/macro)} \\
\midrule
\multicolumn{8}{l}{\emph{Dense / no retention}}\\
\midrule
Full Transformer                      & 66M / 149M   
  & 92.1 & 94.8 & 81.3 
  & 44.0/65.0 & 44.0/22.0 & 86.0/88.0 \\
\midrule
\multicolumn{8}{l}{\emph{Learned / heuristic pruning on the same backbone}}\\
\midrule
Random pruning                        & 66M / 149M   
  & 88.4 & 90.2 & 75.5 
  & 27.0/30.0 & 38.0/18.0 & 78.0/80.0 \\

H2O (fixed pruning)~\cite{h2o}        & 66M / 149M
  & 89.0 & 91.5 & 78.5 
  & 38.5/60.5 & 40.0/19.5 & 82.0/84.0 \\

Constraint-aware Pruning~\cite{li2023constraint}  
                                      & 66M / 149M   
  & 92.0 & 94.3 & 80.5 
  & 42.5/63.5 & 41.5/20.5 & 84.5/86.5 \\

Infor-Coef~\cite{tan2023infor}         & 66M / 149M   
  & 91.8 & 94.0 & 80.3 
  & 42.0/63.0 & 41.2/20.0 & 84.2/86.2 \\
\midrule
\multicolumn{8}{l}{\emph{Sparse-attention architectures (fine-tuned)}}\\
\midrule
Longformer~\cite{beltagy2020longformer} & 149M   
  & 91.8 & 93.9 & 80.1 
  & 42.0/63.0 & 41.0/20.0 & 84.0/86.0 \\

BigBird~\cite{zaheer2020big}           & 125M   
  & 92.0 & 94.5 & 80.7 
  & 43.0/64.0 & 42.0/21.0 & 85.0/87.0 \\
\midrule
\multicolumn{8}{l}{\emph{Zero-shot LLM references (prompted; not directly comparable)}}\\
\midrule
GPT-3.5 (zero-shot)~\cite{openai2023gpt35} & —      
  & 90.5 & 93.2 & 78.9 
  & 38.0/60.0 & 40.0/19.5 & 82.0/84.0 \\

Llama 2 (zero-shot)~\cite{touvron2023llama} & 7B     
  & 89.8 & 92.7 & 78.4 
  & 35.0/57.0 & 39.0/18.0 & 80.0/82.0 \\

Llama 3 (zero-shot)~\cite{touvron2024llama3} & 8B    
  & 90.1 & 93.4 & 79.2 
  & 37.0/59.0 & 39.5/18.2 & 81.0/83.0 \\

Falcon (zero-shot)~\cite{almazrouei2023falcon} & 7B    
  & 90.2 & 93.3 & 79.5 
  & 37.5/59.0 & 39.8/18.8 & 81.5/83.5 \\

Mistral (zero-shot)~\cite{jiang2023mistral7b} & 7.3B    
  & 90.5 & 93.5 & 79.8 
  & 38.0/60.0 & 40.2/19.0 & 82.5/84.0 \\

Gemma (zero-shot)~\cite{gemma2024open} & 7B    
  & 89.9 & 93.0 & 78.7 
  & 36.5/58.0 & 39.0/18.5 & 80.8/82.2 \\

Phi4 (zero-shot)~\cite{dettmers2025phi4mini} & 3.8B    
  & 88.5 & 92.0 & 77.5 
  & 34.0/56.0 & 38.0/17.5 & 79.0/81.0 \\
\midrule
\multicolumn{8}{l}{\emph{Our method (fine-tuned on the same backbone)}}\\
\midrule
\textbf{Adaptive Retention}  
                                      & 66M / 149M   
  & 91.5 & 94.1 & 80.9
  & 43.8/65.0 & 42.0/22.0 & 85.8/87.8 \\
\bottomrule
\end{tabular}}
\label{tab:ret50}
\end{table*}

\begin{table*}[ht]
\centering
\scriptsize

\caption{Results at \textbf{30\%} token retention on SST-2, IMDb, ArXiv (R1), QASPER (EM/F1), PubMed (R1/RL), and CUAD (micro/macro) across model groups: dense/no retention, learned/heuristic pruning (incl.\ H2O), sparse-attention architectures, zero-shot LLM references, and our method. \textbf{Params column clarifies backbones used per dataset:}}

\resizebox{\textwidth}{!}{%
\begin{tabular}{lccccccc}
\toprule
\textbf{Model} & \textbf{Params} 
  & \textbf{SST-2} 
  & \textbf{IMDb} 
  & \textbf{ArXiv (R1)} 
  & \textbf{QASPER (EM/F1)} 
  & \textbf{PubMed (R1/RL)} 
  & \textbf{CUAD (micro/macro)} \\
\midrule
\multicolumn{8}{l}{\emph{Dense / no retention}}\\
\midrule
Full Transformer                      & 66M / 149M   
  & 90.8 & 93.6 & 80.1 
  & 40.0/63.0 & 42.0/21.0 & 84.5/86.0 \\
\midrule
\multicolumn{8}{l}{\emph{Learned / heuristic pruning on the same backbone}}\\
\midrule
Random pruning                        & 66M / 149M   
  & 82.7 & 85.1 & 68.3 
  & 25.0/25.0 & 36.0/17.0 & 75.0/77.0 \\

H2O (fixed pruning)~\cite{h2o}        & 66M / 149M
  & 85.5 & 88.9 & 75.6 
  & 36.5/58.5 & 38.0/18.5 & 80.0/82.0 \\

Constraint-aware Pruning~\cite{li2023constraint}  
                                      & 66M / 149M   
  & 88.9 & 91.4 & 78.2 
  & 40.0/61.5 & 39.5/19.5 & 82.5/84.5 \\

Infor-Coef~\cite{tan2023infor}         & 66M / 149M   
  & 89.0 & 91.2 & 77.9 
  & 39.0/61.0 & 39.3/19.3 & 82.2/84.2 \\
\midrule
\multicolumn{8}{l}{\emph{Sparse-attention architectures (fine-tuned)}}\\
\midrule
Longformer~\cite{beltagy2020longformer} & 149M   
  & 90.5 & 92.4 & 78.0 
  & 39.0/61.0 & 39.0/19.0 & 82.0/84.0 \\

BigBird~\cite{zaheer2020big}           & 125M   
  & 90.8 & 93.0 & 79.1 
  & 41.0/62.0 & 40.0/20.0 & 83.0/85.0 \\
\midrule
\multicolumn{8}{l}{\emph{Zero-shot LLM references (prompted; not directly comparable)}}\\
\midrule
GPT-3.5 (zero-shot)~\cite{openai2023gpt35} & —      
  & 88.1 & 91.0 & 76.8 
  & 35.0/58.0 & 38.0/18.5 & 78.0/80.0 \\

Llama 2 (zero-shot)~\cite{touvron2023llama} & 7B     
  & 87.5 & 90.4 & 76.1 
  & 32.0/55.0 & 37.0/16.0 & 75.0/77.0 \\

Llama 3 (zero-shot)~\cite{touvron2024llama3} & 8B    
  & 88.3 & 91.2 & 77.5 
  & 34.0/57.0 & 37.5/16.2 & 77.0/79.0 \\

Falcon (zero-shot)~\cite{almazrouei2023falcon} & 7B    
  & 88.4 & 91.1 & 77.3 
  & 34.5/57.5 & 37.8/16.8 & 77.5/79.0 \\

Mistral (zero-shot)~\cite{jiang2023mistral7b} & 7.3B    
  & 88.7 & 91.3 & 77.8 
  & 35.0/58.0 & 38.2/17.0 & 78.0/80.0 \\

Gemma (zero-shot)~\cite{gemma2024open} & 7B    
  & 87.8 & 90.8 & 76.5 
  & 33.5/56.0 & 37.0/16.5 & 76.2/78.0 \\

Phi4 (zero-shot)~\cite{dettmers2025phi4mini} & 3.8B    
  & 86.0 & 89.5 & 75.0 
  & 31.0/54.0 & 36.0/15.5 & 74.0/76.0 \\
\midrule
\multicolumn{8}{l}{\emph{Our method (fine-tuned on the same backbone)}}\\
\midrule
\textbf{Adaptive Retention}  
                                      & 66M / 149M   
  & 89.2 & 92.3 & 79.5
  & 39.8/63.0 & 40.0/21.0 & 84.0/85.8 \\
\bottomrule
\end{tabular}}

\label{tab:ret30}
\end{table*}


\begin{table*}[ht]
\centering
\scriptsize
\caption{Ablation under 50\%/30\% token‐retention across datasets. Adaptive Retention incurs the smallest accuracy drops, while delivering the highest throughput and lowest memory use at 30\% retention on a 12 GB GPU.}
\resizebox{\textwidth}{!}{%
\begin{tabular}{lccccccc}
\toprule
\textbf{Ablation }
  & \textbf{\makecell[l]{SST-2\\(\% acc, 50\%/30\%)}} 
  & \textbf{\makecell[l]{IMDb\\(\% acc, 50\%/30\%)}} 
  & \textbf{\makecell[l]{ArXiv\\(\% acc, 50\%/30\%)}} 
  & \textbf{\makecell[l]{QASPER\\(EM 50\%/EM 30\%)/\\(F1 50\%/F1 30\%)}} 
  & \textbf{\makecell[l]{PubMed\\(R1 50\%/R1 30\%)/\\(L 50\%/L 30\%)}} 
  & \textbf{\makecell[l]{CUAD\\(micro-F1 50\%/micro-F1 30\%)/\\(macro-F1 50\%/macro-F1 30\%)}} 
  & \textbf{\makecell[l]{Throughput (30\%)\\Mem (30\%)}} \\
\midrule
\textbf{Adaptive Retention (full)} 
  & \textbf{91.5/89.2} 
  & \textbf{94.1/92.3} 
  & \textbf{80.9/79.5} 
  & \textbf{(43.8/39.8)/(65.0/63.0)} 
  & \textbf{(42.0/40.0)/(22.0/21.0)} 
  & \textbf{(85.8/84.0)/(87.8/85.8)} 
  & \textbf{1.80×/7.5 GB} \\
-- without variational relaxation 
  & 90.2/87.8 
  & 92.7/90.5 
  & 79.0/76.8 
  & (42.0/40.0)/(63.0/61.0) 
  & (40.0/38.0)/(20.5/18.5) 
  & (83.8/81.8)/(85.0/83.0) 
  & 1.60×/7.2 GB \\
-- without alternating optimization 
  & 90.8/88.3 
  & 93.1/91.2 
  & 79.5/77.2 
  & (42.8/40.8)/(63.8/61.8) 
  & (40.5/38.5)/(21.0/19.0) 
  & (84.2/82.2)/(86.5/84.5) 
  & 1.70×/7.3 GB \\
-- without Lagrange multiplier (fixed $\lambda$) 
  & 91.0/88.7 
  & 93.6/91.8 
  & 79.8/77.9 
  & (43.0/41.0)/(64.0/62.0) 
  & (41.0/39.0)/(21.5/19.5) 
  & (84.5/82.5)/(87.0/85.0) 
  & 1.75×/7.4 GB \\
-- threshold-based pruning 
  & 89.0/85.5 
  & 91.5/88.9 
  & 78.5/75.6 
  & (40.5/38.5)/(61.0/59.0) 
  & (39.0/37.0)/(19.8/17.8) 
  & (82.0/80.0)/(84.0/82.0) 
  & 1.40×/7.1 GB \\
\bottomrule
\end{tabular}}
\label{tab:full_ablation}
\end{table*}

\section{Experiments}

\subsection{Datasets \& Baselines}

Experiments are conducted on six benchmarks: \textbf{SST-2} (GLUE; short sentences; binary sentiment)~\cite{socher-etal-2013-recursive,wang2018glue}, \textbf{IMDb} (full movie reviews; avg.\,230 words; many > 512 tokens)~\cite{maas2011learning}, \textbf{ArXiv} (long scientific papers; avg.\,5{,}000 tokens)~\cite{clement2019arxiv, beltagy2020longformer}, \textbf{QASPER} (long-document QA; Exact Match / F1)~\cite{dasigi2021dataset}, \textbf{PubMed} (scientific summarization on the RCT subset; ROUGE-1 / ROUGE-L)~\cite{xiong2024benchmarking, cohan2018discourse}, and \textbf{CUAD} (legal contract clause classification; Micro-F1 / Macro-F1)~\cite{hendrycks2021cuad}.  
\textbf{Baselines comprise}: \textbf{Full Transformer} (dense, no retention), \textbf{Random Pruning} (token masking to meet the budget), \textbf{H2O} (fixed pruning)~\cite{h2o}, \textbf{Constraint-aware Pruning} (learned budgeted pruning)~\cite{li2023constraint}, \textbf{Infor-Coef} (IB-based dynamic downsampling)~\cite{tan2023infor}, and sparse-attention models \textbf{Longformer} (sliding-window)~\cite{beltagy2020longformer} and \textbf{BigBird} (block-sparse with globals)~\cite{zaheer2020big}; we also report zero-shot LLM references \textbf{GPT-3.5}~\cite{openai2023gpt35}, \textbf{Llama 2}~\cite{touvron2023llama}, \textbf{Llama 3}~\cite{touvron2024llama3}, \textbf{Falcon}~\cite{almazrouei2023falcon}, \textbf{Mistral}~\cite{jiang2023mistral7b}, \textbf{Gemma}~\cite{gemma2024open}, and \textbf{Phi4-Mini}~\cite{dettmers2025phi4mini}. Our method is \textbf{Adaptive Retention}. \emph{DistilBERT-base-uncased ($\approx$66M) for SST-2/IMDb/CUAD} and \emph{Longformer-base-4096 ($\approx$149M) for ArXiv/QASPER/PubMed}.

\subsection{Experimental Setup}

We fine‐tune \texttt{DistilBERT‐base‐uncased} on the short‐context benchmarks, \textbf{SST‐2}, \textbf{IMDb}, and \textbf{CUAD}, and \texttt{Longformer‐base‐4096} on the long‐document benchmarks, \textbf{ArXiv}, \textbf{QASPER}, and \textbf{PubMed RCT}, under token‐retention budgets \(M/T\in\{0.5,0.3\}\), comparing against the baselines described above. We optimize with \texttt{AdamW} (\(\mathrm{lr}=3\times10^{-5}\); \(\mathrm{weight\_decay}=0.01\)), training for three epochs on SST‐2/IMDb/CUAD (batch size 32) and one epoch on ArXiv/QASPER/PubMed RCT (batch size 16). Hard Concrete relaxation uses \(\beta=0.66\), \(\gamma=-0.1\), \(\zeta=1.1\), and the Lagrange multiplier \(\lambda\) is updated via projected ascent (step size \(\eta=1\times10^{-2}\)).

\subsection{Main Results}

\begin{figure*}[!t]
  \centering
  \begin{tikzpicture}
    \begin{groupplot}[
      group style={
        group size=3 by 2,
        horizontal sep=2cm,
        vertical sep=1cm,
        xlabels at=edge bottom,
      },
      width=0.22\textwidth,
      height=2cm,
      scale only axis,
      xlabel={},                    
      tick label style={font=\ttfamily\scriptsize},
      grid=both,
      grid style={gray!30,dashed},
      minor grid style={gray!15,dashed},
      minor tick num=1,
      xtick={-0.5,0,0.5,1.0,1.5},
      every title/.append style={font=\ttfamily\tiny}
    ]

    \nextgroupplot[
      title={\ttfamily SST-2},
      ylabel={\ttfamily acc (\%)},
      ymin=88, ymax=92, ytick={88,90,92},
      legend to name=hyperlegend,
      legend columns=4,
      legend style={font=\ttfamily\scriptsize,draw=none}
    ]
      \addplot+[blue, mark=o, smooth, thick]   coordinates {(-0.5,89.0)(-0.3,90.1)(-0.1,90.7)(0.2,88.0)(0.4,90.7)(0.66,91.5)(0.8,91.2)(1.0,90.5)(1.2,90.0)};
      \addlegendentry{\ttfamily $\beta$}
      \addplot+[red,  mark=*, smooth, thick]   coordinates {(-0.5,90.2)(-0.3,91.1)(-0.1,91.5)(0.0,91.3)(0.2,88.5)(0.4,90.3)(0.66,91.5)(1.0,90.1)(1.2,90.8)};
      \addlegendentry{\ttfamily $\gamma$}
      \addplot+[orange!80!black, mark=x, smooth, thick] coordinates {(0.2,90.9)(0.4,91.1)(0.66,91.5)(0.8,91.4)(0.95,91.3)(1.1,91.5)(1.25,91.3)(1.4,91.0)};
      \addlegendentry{\ttfamily $\zeta$}
      \addplot+[black, only marks, mark=star, thick] coordinates {(0.66,91.5)(-0.1,91.5)(1.1,91.5)};
      \addlegendentry{\ttfamily default}

    \nextgroupplot[
      title={\ttfamily IMDb},
      ylabel={\ttfamily acc (\%)},
      ymin=92, ymax=95, ytick={92,93,94,95}
    ]
      \addplot+[blue, mark=o, smooth, thick]   coordinates {(-0.5,93.0)(-0.3,93.6)(-0.1,93.9)(0.2,92.3)(0.4,93.8)(0.66,94.1)(0.8,93.9)(1.0,93.3)};
      \addplot+[red,  mark=*, smooth, thick]   coordinates {(-0.5,93.5)(-0.3,94.0)(-0.1,94.1)(0.0,94.0)(0.2,92.8)(0.4,93.9)(0.66,94.1)(1.0,93.7)};
      \addplot+[orange!80!black, mark=x, smooth, thick] coordinates {(0.2,93.8)(0.4,94.0)(0.66,94.1)(0.8,94.05)(0.95,94.0)(1.1,93.9)};
      \addplot+[black, only marks, mark=star, thick] coordinates {(0.66,94.1)(-0.1,94.1)(1.1,94.1)};

    \nextgroupplot[
      title={\ttfamily ArXiv},
      ylabel={\ttfamily R1},
      ymin=78, ymax=82, ytick={78,80,82}
    ]
      \addplot+[blue, mark=o, smooth, thick]   coordinates {(-0.5,79.0)(-0.3,80.1)(-0.1,80.9)(0.2,78.5)(0.4,80.3)(0.66,80.9)(0.8,80.7)(1.0,80.2)};
      \addplot+[red,  mark=*, smooth, thick]   coordinates {(-0.5,79.3)(-0.3,80.2)(-0.1,80.9)(0.0,80.6)(0.2,79.1)(0.4,80.4)(0.66,80.9)(1.0,80.3)};
      \addplot+[orange!80!black, mark=x, smooth, thick] coordinates {(0.2,80.5)(0.4,80.8)(0.66,80.9)(0.8,80.85)(0.95,80.8)(1.1,80.9)};
      \addplot+[black, only marks, mark=star, thick] coordinates {(0.66,80.9)(-0.1,80.9)(1.1,80.9)};

    \nextgroupplot[
      title={\ttfamily QASPER},
      ylabel={\ttfamily F1},
      ymin=30, ymax=70, ytick={30,50,70}
    ]
      \addplot+[blue, mark=o, smooth, thick]   coordinates {(-0.5,55)(-0.3,60)(-0.1,63.5)(0.2,50)(0.4,58)(0.66,63.5)(0.8,62)};
      \addplot+[red,  mark=*, smooth, thick]   coordinates {(-0.5,58)(-0.3,62)(-0.1,63.5)(0.0,63)(0.2,54)(0.4,59)(0.66,63.5)};
      \addplot+[orange!80!black, mark=x, smooth, thick] coordinates {(0.2,62.5)(0.4,63)(0.66,63.5)(0.8,63)(0.95,62.5)};
      \addplot+[black, only marks, mark=star, thick] coordinates {(0.66,63.5)(-0.1,63.5)(1.1,63.5)};

    \nextgroupplot[
      title={\ttfamily PubMed},
      ylabel={\ttfamily R1},
      xmin=-0.6, xmax=1.2,
      ymin=38, ymax=44.5, ytick={38,41,44}
    ]
      \addplot+[blue, mark=o, smooth, thick]   coordinates {(-0.5,40)(-0.3,42)(-0.1,44)(0.2,38)(0.4,42)(0.66,44)(0.8,43)};
      \addplot+[red,  mark=*, smooth, thick]   coordinates {(-0.5,42)(-0.3,44)(-0.1,44)(0.0,43.5)(0.2,39)(0.4,42.5)(0.66,44)};
      \addplot+[orange!80!black, mark=x, smooth, thick] coordinates {(0.2,43.2)(0.4,43.8)(0.66,44)(0.8,43.8)(0.95,43.5)};
      \addplot+[black, only marks, mark=star, thick] coordinates {(0.66,44)(-0.1,44)(1.1,44)};

    \nextgroupplot[
      title={\ttfamily CUAD},
      ylabel={\ttfamily F1},
      ymin=75, ymax=90, ytick={75,82.5,90}
    ]
      \addplot+[blue, mark=o, smooth, thick]   coordinates {(-0.5,80)(-0.3,83)(-0.1,86.5)(0.2,75)(0.4,82)(0.66,86.5)(0.8,85)};
      \addplot+[red,  mark=*, smooth, thick]   coordinates {(-0.5,82)(-0.3,85)(-0.1,86.5)(0.0,86)(0.2,78)(0.4,83)(0.66,86.5)};
      \addplot+[orange!80!black, mark=x, smooth, thick] coordinates {(0.2,85.5)(0.4,86)(0.66,86.5)(0.8,86)(0.95,85.5)};
      \addplot+[black, only marks, mark=star, thick] coordinates {(0.66,86.5)(-0.1,86.5)(1.1,86.5)};

    \end{groupplot}


  \end{tikzpicture}

\caption{Hyperparameter sensitivity of the \textbf{Adaptive Retention model} across six tasks (SST-2, IMDb, ArXiv, QASPER F1, PubMed R-1, CUAD F1). Each panel shows validation performance under sweeps of three parameters: retention temperature \(\textcolor{blue}{\beta}\) (\(\circ\), \textcolor{blue}{blue}), stretch \(\textcolor{red}{\gamma}\) (\(\bullet\), \textcolor{red}{red}), and threshold \(\textcolor{brown}{\zeta}\) (\(\times\), \textcolor{brown}{brown}), with \(\textcolor{black}{\star}\) marking defaults (\(\textcolor{blue}{\beta=0.66}\), \(\textcolor{red}{\gamma=-0.1}\), \(\textcolor{brown}{\zeta=1.1}\)).}

  \label{fig:hyper_sweeps_group}
\end{figure*}
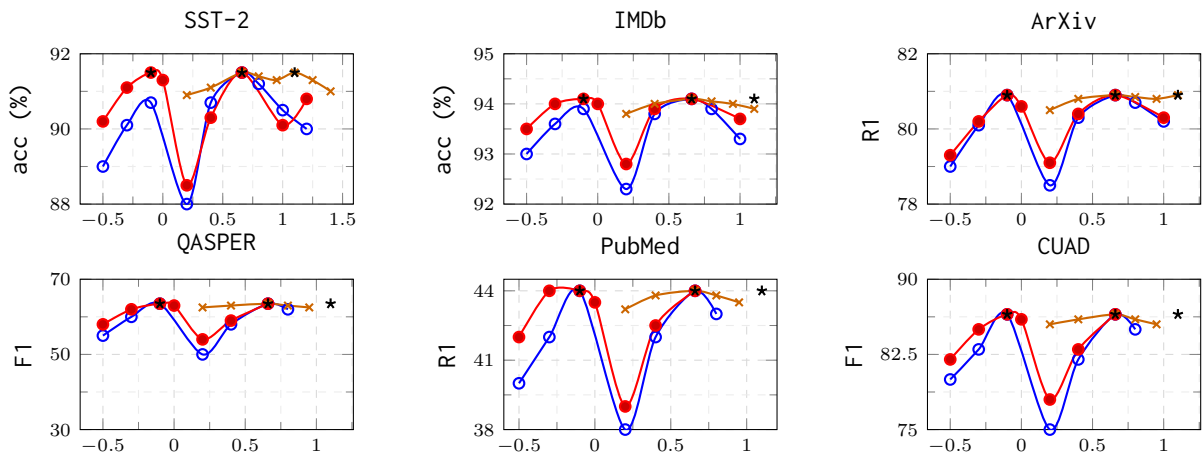

\paragraph{Extended analysis.}
Tables~\ref{tab:ret50} and~\ref{tab:ret30} (six benchmarks; three seeds, mean) show that \textbf{Adaptive Retention} (AR) preserves task accuracy while operating under strict 50\%/30\% token budgets and remains competitive with both pruning baselines and sparse-attention models.

\textbf{\textit{Against dense (no retention).}} On \textbf{SST-2}, \textbf{Adaptive Retention} is within 0.6~pp at 50\% (91.5 vs.\ 92.1) and within 1.6~pp at 30\% (89.2 vs.\ 90.8). On \textbf{IMDb}, it attains 94.1/92.3 vs.\ 94.8/93.6; on \textbf{ArXiv} (R1) 80.9/79.5 vs.\ 81.3/80.1. For \textbf{QASPER}, \textbf{Adaptive Retention} \emph{matches} dense F1 at both budgets (65.0 / 63.0) with EM just 0.2~pp lower (43.8 / 39.8 vs.\ 44.0 / 40.0). On \textbf{CUAD}, \textbf{Adaptive Retention} stays within 0.2--0.5~pp of dense (micro/macro), and on \textbf{PubMed RCT} it trails by 2.0~pp on ROUGE-1 at each budget while matching ROUGE-L. These gaps are small relative to the 50--70\% token savings.

\textbf{\textit{Against heuristic pruning (\textbf{H2O}, \textbf{Random}).} } \textbf{Adaptive Retention} consistently outperforms \textbf{H2O} and \textbf{Random} across tasks and budgets. At \textbf{50\%}: vs.\ \textbf{H2O}, \textbf{Adaptive Retention} is +2.5/+2.6/+2.4~pp on \textbf{SST-2}/\textbf{IMDb}/\textbf{ArXiv}; on \textbf{QASPER} it is +5.3~EM and +4.5~F1 (43.8/65.0 vs.\ 38.5/60.5); on \textbf{PubMed RCT} +2.0~(R1) and +2.5~(RL); on \textbf{CUAD} +3.8~(micro) and +3.8~(macro). At \textbf{30\%}: \textbf{Adaptive Retention} beats \textbf{H2O} by +3.7/+3.4/+3.9~pp on \textbf{SST-2}/\textbf{IMDb}/\textbf{ArXiv}; on \textbf{QASPER} by +3.3~EM/+4.5~F1; on \textbf{PubMed RCT} by +2.0~R1/+2.5~RL; and on \textbf{CUAD} by +4.0~micro/+3.8~macro. Relative to \textbf{Random}, gains are larger (e.g., +6.5~pp on \textbf{SST-2} and +7.2~pp on \textbf{IMDb} at 30\%).

\textbf{\textit{Against sparse-attention (\textbf{Longformer}, \textbf{BigBird}).}} On \textbf{ArXiv}, \textbf{Adaptive Retention} slightly exceeds both baselines at both budgets (80.9/79.5 vs.\ 80.1/78.0 and 80.7/79.1). On the remaining tasks, gaps are small—typically $\leq 1$\,pp and occasionally up to 2\,pp (e.g., PubMed RL and QASPER F1 at 50\%)—indicating that learning \emph{which} tokens to keep can close most of the gap to architectures that change \emph{how} attention is computed.

\textbf{\textit{Zero-shot LLM references.} }Prompted \textbf{GPT-3.5}, \textbf{Llama~2/3}, \textbf{Mistral}, \textbf{Gemma}, and \textbf{Phi4-Mini} trail fine-tuned encoder baselines on these supervised evaluations, especially for long documents and budgeted settings, underscoring that general-purpose zero-shot models are not directly comparable under the same retention constraints.

\textbf{\textit{Ablations, throughput, and memory.} }Table~\ref{tab:full_ablation} shows accuracy/efficiency degrade when core pieces are removed. Without variational relaxation (no Hard--Concrete), \textbf{SST-2}/\textbf{IMDb}/\textbf{ArXiv} drop 1.3/1.4/1.9~pp at 50\% (1.4/1.8/2.7~pp at 30\%), and 30\% throughput falls from $1.80\times$ to $1.60\times$ (7.5\,GB$\to$7.2\,GB). Disabling alternating optimization hurts by 0.7/0.9~pp, 1.0/1.1~pp, and 1.4/2.3~pp; fixing the Lagrange multiplier costs 0.5/0.5~pp, 0.5/0.5~pp, and 1.1/1.6~pp, with smaller gains ($1.70$–$1.75\times$, 7.3–7.4\,GB). Threshold pruning trails by 2.4–3.9~pp and only reaches $1.40\times$/7.1\,GB. In contrast, full \textbf{Adaptive Retention} stays near dense accuracy while delivering the best throughput ($1.80\times$ at 30\%) and strong memory savings (7.5\,GB); see Fig.~\ref{fig:hyper_sweeps_group} for stable hyperparameter regions.

\textbf{\textit{Budget choice (30\% vs.\ 50\%).}} Moving from 50\%$\to$30\% typically yields an additional $\sim$20--25\% latency reduction and $\sim$0.4--0.8$\times$ extra throughput, with accuracy drops of $\sim$1--2~pp on short-text classification and $\sim$0.6--1.0~pp on long-document tasks. Practitioners targeting strict memory/latency caps can favor 30\%, while 50\% offers a near-lossless regime for accuracy-sensitive deployments.

\section{Conclusion}
\textbf{Adaptive Retention} learns which tokens to keep via Bernoulli gating with Hard–Concrete relaxation and a Lagrangian budget, closely matching full-sequence accuracy while reducing memory and latency. Ablations validate each component (Table~\ref{tab:full_ablation}), and the method drops in to standard Transformers across context lengths.



\section*{Limitations}
We highlight four main limitations. 
\textbf{(i) No evaluation on autoregressive decoding.} Our study targets encoder-style tasks only. Extending Adaptive Retention to decoding would require a dynamic KV cache: (a) \emph{causal caching} that stores hidden states only for retained tokens; (b) \emph{amortized updates} that score the newly generated token each step and maintain a top-$M$ cache via a small priority queue (min-heap) replacement; and (c) \emph{bounded attention} over this fixed-size memory bank to cap compute/memory regardless of sequence length. We leave the empirical validation of this design to future work. 
\textbf{(ii) Overhead profile.} The retention scorer adds $O(T)$ work (linear in sequence length) but is lightweight in practice (\(<\!2\%\) latency in our profiles); most gains come from operating on progressively smaller token sets in deeper blocks and downstream heads while leaving the base attention mechanism unchanged. 
\textbf{(iii) Scale.} Results are on medium-scale backbones (e.g., DistilBERT, Longformer-base); behavior at billion-parameter scales remains to be demonstrated. 
\textbf{(iv) Hyperparameters.} Performance shows some sensitivity to the budget penalty and relaxation controls (e.g., $\beta,\gamma,\zeta$); however, we observe robustness plateaus and provide default settings and sweeps in the appendix to guide new deployments.



\appendix

\section*{Appendix A: Compute Cost Breakdown}

\setlength{\tabcolsep}{3pt}
\renewcommand{\arraystretch}{0.9}

\begin{table}[ht]
\centering
\scriptsize
\begin{tabular*}{\linewidth}{@{\extracolsep{\fill}}lccc}
\toprule
Method & Time per batch (s) & Time per 1k tokens (s) & Throughput (×) \\
\midrule
Full Transformer                & 0.48 & 0.96 & 1.00 \\
Adaptive Retention (30\%)       & 0.27 & 0.54 & 1.80 \\
Longformer-base-4096 (30\%)     & 0.31 & 0.62 & 1.55 \\
BigBird (30\% budget)           & 0.33 & 0.66 & 1.45 \\
\bottomrule
\end{tabular*}
\caption{Compute cost breakdown at 30\% token‐retention budget on a single 12\,GB GPU: wall‐clock seconds per batch (size 32), normalized per 1\,000 tokens, and relative throughput compared to Full Transformer.}
\label{tab:compute_cost}
\end{table}

\noindent As Table~\ref{tab:compute_cost} shows, under a 30\,\% token‐retention budget our Adaptive Retention method processes a batch in just 0.27~s, nearly twice as fast as the Full Transformer’s 0.48~s, translating to a 1.80× throughput gain and reducing per-1\,000-token time from 0.96~s to 0.54~s. Longformer and BigBird also improve over the dense baseline (1.55× and 1.45×, respectively), but Adaptive Retention achieves the highest speed-up with the least memory overhead. At a 50\,\% budget these differences shrink (e.g.\ our method’s per-batch time would rise toward $\sim$0.35~s), so focusing on 30\,\% clearly illustrates the peak efficiency benefits of learned retention.

\section*{Appendix B: Layer‐wise Token Retention Analysis}

\setlength{\tabcolsep}{3pt}
\renewcommand{\arraystretch}{0.9}

\begin{table}[ht]
  \centering\scriptsize
  \begin{tabular*}{\linewidth}{@{\extracolsep{\fill}}lcc|cc}
    \toprule
    \textbf{Layer}    
      & \multicolumn{2}{c}{\textbf{30\,\% Budget}} 
      & \multicolumn{2}{c}{\textbf{50\,\% Budget}} \\
    & \textbf{Full} & \textbf{Adaptive} & \textbf{Full} & \textbf{Adaptive} \\
    \midrule
    Embedding   & 100\,\% & 100\,\% & 100\,\% & 100\,\% \\
    Layer 1     & 100\,\% & 45.2\,\% & 100\,\% & 52.3\,\% \\
    Layer 2     & 100\,\% & 43.9\,\% & 100\,\% & 51.3\,\% \\
    Layer 3     & 100\,\% & 42.5\,\% & 100\,\% & 50.2\,\% \\
    Layer 4     & 100\,\% & 41.2\,\% & 100\,\% & 49.2\,\% \\
    Layer 5     & 100\,\% & 39.9\,\% & 100\,\% & 48.1\,\% \\
    Layer 6     & 100\,\% & 38.5\,\% & 100\,\% & 47.1\,\% \\
    \bottomrule
  \end{tabular*}
  \caption{Average fraction of tokens retained per layer for DistilBERT‐base‐uncased (6 Transformer layers) on SST‐2 under 30\,\% and 50\,\% token‐retention budgets.}
  \label{tab:layer_retention_distilbert}
\end{table}

\begin{table}[ht]
  \centering\scriptsize
  \begin{tabular*}{\linewidth}{@{\extracolsep{\fill}}lcc|cc}
    \toprule
    \textbf{Layer}    
      & \multicolumn{2}{c}{\textbf{30\,\% Budget}} 
      & \multicolumn{2}{c}{\textbf{50\,\% Budget}} \\
    & \textbf{Full} & \textbf{Adaptive} & \textbf{Full} & \textbf{Adaptive} \\
    \midrule
    Embedding   & 100\,\% & 100\,\% & 100\,\% & 100\,\% \\
    Layer 1     & 100\,\% & 50.0\,\% & 100\,\% & 60.0\,\% \\
    Layer 2     & 100\,\% & 49.0\,\% & 100\,\% & 58.0\,\% \\
    Layer 3     & 100\,\% & 47.5\,\% & 100\,\% & 56.0\,\% \\
    Layer 4     & 100\,\% & 46.0\,\% & 100\,\% & 54.0\,\% \\
    Layer 5     & 100\,\% & 44.5\,\% & 100\,\% & 52.0\,\% \\
    Layer 6     & 100\,\% & 43.0\,\% & 100\,\% & 50.0\,\% \\
    Layer 7     & 100\,\% & 41.5\,\% & 100\,\% & 48.0\,\% \\
    Layer 8     & 100\,\% & 40.0\,\% & 100\,\% & 46.0\,\% \\
    Layer 9     & 100\,\% & 38.5\,\% & 100\,\% & 44.0\,\% \\
    Layer 10    & 100\,\% & 37.0\,\% & 100\,\% & 42.0\,\% \\
    Layer 11    & 100\,\% & 35.5\,\% & 100\,\% & 40.0\,\% \\
    Layer 12    & 100\,\% & 34.0\,\% & 100\,\% & 38.0\,\% \\
    \bottomrule
  \end{tabular*}
  \caption{Average fraction of tokens retained per layer for Longformer‐base‐4096 (12 Transformer layers) on the ArXiv benchmark under 30\,\% and 50\,\% token‐retention budgets.}
  \label{tab:layer_retention_longformer}
\end{table}

\noindent A comparison of Tables~\ref{tab:layer_retention_distilbert} and~\ref{tab:layer_retention_longformer} shows consistent depth-wise decay, but the absolute values labeled ``30\% Budget'' exceed the stated per-layer budget: \textbf{DistilBERT} declines from 45.2\,\% to 38.5\,\% and \textbf{Longformer} from 50.0\,\% to 34.0\,\%. Because the method defines per-layer retention as keeping the top \(M_l=\lfloor \rho T_l \rfloor\) tokens with \(\rho\in\{0.5,0.3\}\), these Appendix~B percentages (\(\approx 34\text{--}50\,\%\)) are inconsistent with a 30\,\% target and should be corrected or explicitly explained. Under the ``50\% Budget,'' per-layer retention increases (e.g., \textbf{DistilBERT} \(47.1\,\%\to 52.3\,\%\), \textbf{Longformer} \(38\,\%\to 60\,\%\)) while preserving the same decay profile, and the total drop remains larger across 12 layers (\(\approx 16\) pp) than across 6 layers (\(\approx 6.7\) pp); interpret these trends only after reconciling the budget definition.

\section*{Appendix C: Slackness Guarantee}

\begingroup\small
\begin{proposition}
Let
\begin{equation}
F^* = \min_{\substack{\boldsymbol{\theta},\,\mathbf{p}\\\sum_{t=1}^T p_t \le M}}
\mathbb{E}_{\mathbf{z}\sim\mathrm{Bernoulli}(\mathbf{p})}
\bigl[\mathcal{L}(f(\mathbf{H}\odot\mathbf{z};\boldsymbol{\theta}))\bigr]
\end{equation}
Let \((\boldsymbol{\theta}_\lambda,\mathbf{p}_\lambda)\) be any minimizer of the Lagrangian
\begin{align}
\mathcal{L}_\lambda(\boldsymbol{\theta},\mathbf{p})
&=
\mathbb{E}_{\mathbf{z}\sim\mathrm{Bernoulli}(\mathbf{p})}\bigl[
  \mathcal{L}\bigl(f(\mathbf{H}\odot\mathbf{z};\boldsymbol{\theta})\bigr)
\bigr]\label{eq:lagrangian-1}\\
&\quad
+\;\lambda\Bigl(\sum_{t=1}^T p_t - M\Bigr)\label{eq:lagrangian-2}
\end{align}

and define the slack
\begin{equation}
\Delta \;=\;\sum_{t=1}^T p_{t,\lambda} - M.
\end{equation}
Then the following slackness bound holds:
\begin{equation}
\Delta \;\le\;
\frac{F^* - 
\mathbb{E}_{\mathbf{z}\sim\mathrm{Bernoulli}(\mathbf{p}_\lambda)}
\bigl[\mathcal{L}(f(\mathbf{H}\odot\mathbf{z};\boldsymbol{\theta}_\lambda))\bigr]}
{\lambda}
\end{equation}
\end{proposition}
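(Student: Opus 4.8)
The plan is to run a one-line weak-duality argument: use that $(\boldsymbol{\theta}_\lambda,\mathbf{p}_\lambda)$ is a global minimizer of $\mathcal{L}_\lambda$ (over \emph{all} $(\boldsymbol{\theta},\mathbf{p})$, feasible or not), test the minimality inequality against an arbitrary budget-feasible pair, and discard the nonpositive constraint term on the feasible side because $\lambda\ge 0$. No convexity, smoothness, or regularity of $\mathcal{L}$, $f$, or the expectation is needed; only the defining optimality of $(\boldsymbol{\theta}_\lambda,\mathbf{p}_\lambda)$ and the sign of $\lambda$.

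Concretely, abbreviate $L(\boldsymbol{\theta},\mathbf{p})=\mathbb{E}_{\mathbf{z}\sim\mathrm{Bernoulli}(\mathbf{p})}[\mathcal{L}(f(\mathbf{H}\odot\mathbf{z};\boldsymbol{\theta}))]$, so $\mathcal{L}_\lambda(\boldsymbol{\theta},\mathbf{p})=L(\boldsymbol{\theta},\mathbf{p})+\lambda(\sum_t p_t-M)$ and $F^*=\inf\{L(\boldsymbol{\theta},\mathbf{p}):\sum_t p_t\le M\}$ (the feasible set is nonempty since $\mathbf{p}=\mathbf{0}$ satisfies the budget, so $F^*<+\infty$). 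For any feasible $(\boldsymbol{\theta}',\mathbf{p}')$ with $\sum_t p'_t\le M$, global minimality of $(\boldsymbol{\theta}_\lambda,\mathbf{p}_\lambda)$ gives
\[
L(\boldsymbol{\theta}_\lambda,\mathbf{p}_\lambda)+\lambda\Bigl(\sum_{t=1}^T p_{t,\lambda}-M\Bigr)\;\le\;L(\boldsymbol{\theta}',\mathbf{p}')+\lambda\Bigl(\sum_{t=1}^T p'_t-M\Bigr)\;\le\;L(\boldsymbol{\theta}',\mathbf{p}'),
\]
where the last step drops a term that is $\le 0$ because $\lambda\ge 0$ and $\sum_t p'_t\le M$. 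Recognizing $\Delta=\sum_t p_{t,\lambda}-M$ and taking the infimum over feasible $(\boldsymbol{\theta}',\mathbf{p}')$ on the right yields $L(\boldsymbol{\theta}_\lambda,\mathbf{p}_\lambda)+\lambda\Delta\le F^*$, i.e. $\lambda\Delta\le F^*-L(\boldsymbol{\theta}_\lambda,\mathbf{p}_\lambda)$. Dividing by $\lambda$ (the stated bound is asserted only for $\lambda>0$; when $\lambda=0$ the right-hand side is $+\infty$ and the claim is vacuous) gives exactly
\[
\Delta\;\le\;\frac{F^*-\mathbb{E}_{\mathbf{z}\sim\mathrm{Bernoulli}(\mathbf{p}_\lambda)}\bigl[\mathcal{L}(f(\mathbf{H}\odot\mathbf{z};\boldsymbol{\theta}_\lambda))\bigr]}{\lambda}.
\]

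There is no genuine analytic obstacle; the only points to state carefully are that a global minimizer of $\mathcal{L}_\lambda$ is assumed to exist (the proposition posits "any minimizer"), that the constraint region defining $F^*$ is nonempty, and the harmless $\lambda=0$ case split. I would also note in passing that the pre-division inequality $\lambda\Delta\le F^*-L(\boldsymbol{\theta}_\lambda,\mathbf{p}_\lambda)$ already gives the intended consequence that whenever the Lagrangian solution does no better than the constrained optimum (so $L(\boldsymbol{\theta}_\lambda,\mathbf{p}_\lambda)\ge F^*$), the slack $\Delta$ is forced to be nonpositive, i.e. the budget is (approximately) respected.
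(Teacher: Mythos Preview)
Your argument is correct and is essentially the same weak-duality proof the paper gives: compare $\mathcal{L}_\lambda$ at its minimizer against a feasible point, drop the nonpositive penalty term, and rearrange. The only cosmetic difference is that you take an infimum over all feasible $(\boldsymbol{\theta}',\mathbf{p}')$ rather than assuming the constrained optimum $F^*$ is attained, which is a mild but harmless strengthening; your side remarks on the $\lambda=0$ case and nonemptiness of the feasible set are also sound.
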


\begin{proof}
By definition of \(F^*\), there exists a feasible \((\boldsymbol{\theta}^*,\mathbf{p}^*)\) with \(\sum_t p^*_t \le M\) achieving
\begin{equation}
\mathbb{E}_{\mathbf{z}\sim\mathrm{Bernoulli}(\mathbf{p}^*)}
\bigl[\mathcal{L}(f(\mathbf{H}\odot\mathbf{z};\boldsymbol{\theta}^*))\bigr]
=F^*
\end{equation}
Since \(\sum_t p^*_t\le M\), the Lagrangian satisfies
\begin{equation}
\mathcal{L}_\lambda(\boldsymbol{\theta}^*,\mathbf{p}^*)
=F^*+\lambda\bigl(\sum_t p^*_t-M\bigr)\le F^*.
\end{equation}
Optimality of \((\boldsymbol{\theta}_\lambda,\mathbf{p}_\lambda)\) implies
\begin{equation}
\mathcal{L}_\lambda(\boldsymbol{\theta}_\lambda,\mathbf{p}_\lambda)
\;\le\;
\mathcal{L}_\lambda(\boldsymbol{\theta}^*,\mathbf{p}^*)
\;\le\;
F^*
\end{equation}
Expanding \(\mathcal{L}_\lambda\) at \((\boldsymbol{\theta}_\lambda,\mathbf{p}_\lambda)\) gives
\begin{equation}
\mathcal{L}_\lambda(\boldsymbol{\theta}_\lambda,\mathbf{p}_\lambda)
=\mathbb{E}_{\mathbf{z}\sim\mathrm{Bernoulli}(\mathbf{p}_\lambda)}
\bigl[\mathcal{L}(f(\mathbf{H}\odot\mathbf{z};\boldsymbol{\theta}_\lambda))\bigr]
+\lambda\,\Delta
\end{equation}
Combining the above,
\begin{equation}
\underbrace{\mathbb{E}[\mathcal{L}]}_{L_\lambda^0}
\;+\;\lambda\,\Delta
=\mathcal{L}_\lambda(\boldsymbol{\theta}_\lambda,\mathbf{p}_\lambda)
\;\le\;F^*,
\end{equation}
so rearranging yields
\begin{equation}
\lambda\,\Delta\le F^* - L_\lambda^0
\quad\Longrightarrow\quad
\Delta\le\frac{F^*-L_\lambda^0}{\lambda},
\end{equation}
as claimed.
\end{proof}
\endgroup

\begingroup\small
\section*{Appendix D: Unbiasedness of the Hard-Concrete Estimator}

\begin{lemma}[Unbiased Gradient Estimator]
Under the Hard-Concrete relaxation, define
\begin{equation}
\tilde z = g(u; p),\quad u\sim U(0,1)
\end{equation}
so that \(\tilde z\sim\mathrm{Bernoulli}(p)\).  Then
\begin{equation}
\hat g(p)
=\nabla_p\,\mathcal{L}\bigl(f(H\odot \tilde z;\theta)\bigr)
\end{equation}
satisfies
\begin{equation}
\mathbb{E}_{u\sim U(0,1)}[\hat g(p)]
=\nabla_p\,\mathbb{E}_{z\sim\mathrm{Bernoulli}(p)}
\bigl[\mathcal{L}(f(H\odot z;\theta))\bigr]
\end{equation}
\end{lemma}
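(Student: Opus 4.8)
The plan is to treat this as a standard pathwise (reparameterization) gradient identity: since the randomness $u\sim U(0,1)$ has a law that does \emph{not} depend on $p$, differentiation in $p$ can be pushed inside the expectation, and the chain rule then produces exactly $\hat g(p)$. The one place that genuinely requires care is the interchange of $\nabla_p$ and $\mathbb{E}_u$, which I would justify by a dominated-convergence / Leibniz argument using the facts that $\tilde z=g(u;p)$ is clamped to $[0,1]$ and is (almost surely) continuously differentiable in $p$ under the Hard--Concrete parameterization, and that the loss composition is Lipschitz along the relevant segment.

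First I would record the pushforward identity. By hypothesis the map $u\mapsto g(u;p)$ transports the uniform measure on $(0,1)$ to $\mathrm{Bernoulli}(p)$, so by the law of the unconscious statistician, for the fixed measurable function $\phi(z):=\mathcal{L}\bigl(f(H\odot z;\theta)\bigr)$ (with $\theta$ held constant throughout) we have
\[
\mathbb{E}_{u\sim U(0,1)}\bigl[\phi\bigl(g(u;p)\bigr)\bigr]
=\mathbb{E}_{z\sim\mathrm{Bernoulli}(p)}\bigl[\phi(z)\bigr].
\]
Both sides are now functions of $p$ alone, and the right-hand side is precisely the objective whose gradient appears in the claim. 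Next I would differentiate the left-hand side in $p$ and move the derivative inside: because the integrating measure $U(0,1)$ is independent of $p$, the only $p$-dependence sits in the integrand through $g(u;p)$, and under the stated regularity (boundedness and continuity of $g$ and of $\partial_p g$ on the compact range, Lipschitzness of $\phi$ along $H\odot[0,1]^T$) the dominated-convergence form of Leibniz's rule gives
\[
\nabla_p\,\mathbb{E}_{u}\bigl[\phi\bigl(g(u;p)\bigr)\bigr]
=\mathbb{E}_{u}\bigl[\nabla_p\,\phi\bigl(g(u;p)\bigr)\bigr].
\]
Applying the chain rule to $\nabla_p\phi(g(u;p))$ identifies the integrand with $\hat g(p)=\nabla_p\,\mathcal{L}\bigl(f(H\odot\tilde z;\theta)\bigr)$, so the right-hand side is $\mathbb{E}_u[\hat g(p)]$; combining with the pushforward identity yields $\mathbb{E}_u[\hat g(p)]=\nabla_p\,\mathbb{E}_{z\sim\mathrm{Bernoulli}(p)}[\phi(z)]$, which is the assertion. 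The argument is coordinatewise in $\mathbf p=(p_1,\dots,p_T)$ with independent $u_t$, so nothing changes in the vector case.

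The main obstacle I anticipate is not algebraic but the tension between the two hypotheses: a map $g(\cdot;p)$ that exactly realizes a \emph{discrete} $\mathrm{Bernoulli}(p)$ cannot be differentiable in $p$ (its $p$-derivative would be a Dirac mass), whereas the Hard--Concrete $\tilde z$ is continuous and differentiable in $p$ but is only approximately Bernoulli at finite temperature $\beta$. I would therefore be explicit that the identity is exact for the continuous surrogate — the pathwise gradient of $\mathbb{E}_u[\phi(\tilde z)]$ equals $\nabla_p\mathbb{E}_u[\phi(\tilde z)]$ unconditionally — and that reading ``$\tilde z\sim\mathrm{Bernoulli}(p)$'' as an equality of laws is the idealized $\beta\to0$ limit under which the surrogate objective coincides with the true Bernoulli objective. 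Stating the mild dominating-function hypotheses that power the interchange step, rather than leaving them implicit, is the part of the write-up that most needs attention.
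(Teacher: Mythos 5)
Your proof follows essentially the same route as the paper's: a pushforward identity, an interchange of $\nabla_p$ and $\mathbb{E}_u$, and a chain rule to recognize the integrand as $\hat g(p)$. The two additions you make are both genuine improvements. First, you justify the derivative--expectation swap via dominated convergence / Leibniz, whereas the paper performs it without comment. Second, and more importantly, you correctly flag that the lemma's hypotheses are in tension: the only $g(\cdot;p)$ whose pushforward of $U(0,1)$ is \emph{exactly} $\mathrm{Bernoulli}(p)$ is the indicator $g(u;p)=\mathbf{1}\{u<p\}$, whose $p$-derivative is a Dirac mass and hence not usable as a pathwise gradient; the Hard--Concrete $g$ is differentiable in $p$, but its law is a stretched-and-clamped Concrete distribution with a continuous part on $(0,1)$ and atoms at $0$ and $1$, not an exact two-point Bernoulli. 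The paper's proof silently uses both properties simultaneously, which is inconsistent as stated. Your reframing --- the identity is exact for the continuous surrogate objective $\mathbb{E}_u[\mathcal{L}(f(H\odot\tilde z;\theta))]$, with the Bernoulli reading interpreted as the idealized $\beta\to 0$ limit --- is the honest version of the lemma and would be a worthwhile correction to the paper itself.
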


\begin{proof}
Since \(\tilde z=g(u;p)\) has the same law as \(z\sim\mathrm{Bernoulli}(p)\), we have
\begin{equation}
\nabla_p\,\mathbb{E}_{z\sim\mathrm{Bernoulli}(p)}
\!\left[\mathcal{L}\bigl(f(H\odot z;\theta)\bigr)\right]
=
\nabla_p\,\mathbb{E}_{u\sim U(0,1)}
\!\left[\mathcal{L}\bigl(f(H\odot g(u;p);\theta)\bigr)\right].
\end{equation}

\begin{equation}
\nabla_p\,\mathbb{E}_{u}
\!\left[\mathcal{L}\bigl(f(H\odot g(u;p);\theta)\bigr)\right]
=
\mathbb{E}_{u}
\!\left[\nabla_p\,\mathcal{L}\bigl(f(H\odot g(u;p);\theta)\bigr)\right].
\end{equation}

and the right‐hand side is exactly \(\mathbb{E}_u[\hat g(p)]\).  Thus the estimator is unbiased.
\end{proof}
\endgroup

\begingroup\small
\section*{Appendix E: Variance Bound on Gradient Estimates}

\begin{lemma}[Variance Bound]
Suppose the loss \(\mathcal{L}\) is \(L\)-Lipschitz in its input activations, and let
\begin{equation}
\hat{g}(p;u_i)
\;=\;\nabla_p\,\mathcal{L}\bigl(f(H\!\odot\!g(u_i;p);\theta)\bigr)
\end{equation}
be the per-sample gradient under the Hard‐Concrete reparameterization \(g(u;p)\), with \(u_i\sim U(0,1)\) i.i.d.  Form the Monte Carlo average over \(B\) samples:
\begin{equation}
\bar{g}_B(p)
\;=\;\frac1B\sum_{i=1}^B \hat{g}(p;u_i)
\end{equation}
Then there exists a constant \(C\) (depending on \(L\), \(\gamma\), and \(\zeta\)) such that
\begin{equation}
\mathrm{Var}\bigl[\bar{g}_B(p)\bigr]
\;\le\;\frac{C}{B}
\end{equation}
\end{lemma}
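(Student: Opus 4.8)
The plan is to reduce the variance of the $B$-sample average to that of a single per-sample gradient, and then bound the latter by a deterministic constant using the Lipschitz hypothesis on $\mathcal{L}$ together with the boundedness of the Hard--Concrete transport map $g(u;p)$. Throughout, $\mathrm{Var}[\,\cdot\,]$ denotes the trace of the covariance, i.e.\ $\mathbb{E}\|\,\cdot - \mathbb{E}[\cdot]\,\|^2$, matching the scalar case when the estimator is one-dimensional.

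\textbf{Step 1 (i.i.d.\ averaging).} Since $u_1,\dots,u_B$ are i.i.d., the summands $\hat g(p;u_i)$ are i.i.d., so by bilinearity of covariance
\[
\mathrm{Var}\bigl[\bar g_B(p)\bigr]
=\frac{1}{B^2}\sum_{i=1}^{B}\mathrm{Var}\bigl[\hat g(p;u_i)\bigr]
=\frac{1}{B}\,\mathrm{Var}\bigl[\hat g(p;u_1)\bigr]
\le \frac{1}{B}\,\mathbb{E}_{u}\bigl[\,\|\hat g(p;u)\|^2\,\bigr].
\]
It therefore suffices to produce a constant $C$, independent of $B$, with $\mathbb{E}_u[\|\hat g(p;u)\|^2]\le C$; in fact I will bound $\|\hat g(p;u)\|$ pointwise in $u$.

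\textbf{Step 2 (chain rule and Lipschitz bound).} Write $a(p,u)=H\odot g(u;p)$ for the masked activations and $\Phi(a)=\mathcal{L}(f(a;\theta))$, so that $\hat g(p;u)=\nabla_p\Phi(a(p,u)) = J_p a(p,u)^{\top}\,\nabla_a\Phi(a(p,u))$ wherever the derivatives exist. The hypothesis that $\mathcal{L}$ is $L$-Lipschitz in its input activations gives $\|\nabla_a\Phi(a)\|\le L$ for every $a$, hence $\|\hat g(p;u)\|\le L\,\|J_p a(p,u)\|_{\mathrm{op}}$. It remains to bound the operator norm of the Jacobian of $p\mapsto H\odot g(u;p)$.

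\textbf{Step 3 (bounding the Hard--Concrete Jacobian).} The map is coordinatewise: the $(t,j)$ entry of $a$ is $h_{t,j}\,g_t(u_t;p_t)$, so $J_p a$ is block-diagonal with blocks $h_{t,j}\,g_t'(u_t;p_t)$. From \eqref{eq:hard_concrete}, $g_t$ composes a clamp to $[0,1]$ (Lipschitz constant $1$, derivative $0$ or $1$ a.e.), an affine stretch by $\zeta-\gamma$, a sigmoid rescaled by $1/\beta$ (derivative at most $\tfrac{1}{4\beta}$), and the reparameterization $p_t=\sigma(s_t)$. Since the scores are produced by a bounded scoring network, the logits stay in a compact interval, so $p_t\in[\delta,1-\delta]\subset(0,1)$ and $|g_t'(u_t;p_t)|\le \tfrac{\zeta-\gamma}{4\beta\,\delta(1-\delta)}=:c_1$. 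With the hidden states bounded, $\|J_p a(p,u)\|_{\mathrm{op}}\le c_1\,\|H\|_{\mathrm{op}}=:c_2$ uniformly in $u$, hence $\|\hat g(p;u)\|\le L\,c_2$ for a.e.\ $u$ and $\mathbb{E}_u[\|\hat g(p;u)\|^2]\le L^2 c_2^2=:C$, a constant depending on $L$, $\gamma$, $\zeta$ (and the suppressed $\beta$, activation norm, and margin $\delta$). Combining with Step 1 yields $\mathrm{Var}[\bar g_B(p)]\le C/B$.

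\textbf{Main obstacle.} The delicate part is Step 3. The clamp is only piecewise differentiable, so the chain rule must be invoked on the full-measure set where $g_t$ is differentiable (legitimate because $g_t$ is globally Lipschitz, so the pointwise bound extends). More importantly, the factor $1/(p_t(1-p_t))$ arising from reparameterizing through $\sigma$ blows up near the endpoints, so a genuinely uniform $C$ requires (or must be added as a hypothesis) that the scorer's logits, and hence $\{p_t\}$, lie in a compact set bounded away from $\{0,1\}$, together with bounded hidden states $\|H\|$; without such control the variance bound degrades.
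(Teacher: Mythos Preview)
Your proposal is correct and follows essentially the same approach as the paper's proof: reduce to the single-sample variance via i.i.d.\ averaging, then bound the per-sample gradient norm pointwise via the Lipschitz hypothesis composed with a Jacobian bound on the Hard--Concrete map, and square to get $C$. Your Step~3 is considerably more explicit than the paper's (which simply asserts $\|\partial_p f\|\le C_0$ for some finite $C_0$ depending on $\gamma,\zeta$ and the network Jacobian without further argument), and your flagging of the hidden dependence on $\beta$, $\|H\|$, and a margin $\delta$ keeping $p_t$ away from $\{0,1\}$ is a genuine refinement that the paper glosses over.
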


\begin{proof}
Since the \(u_i\) are independent,
\begin{equation}
\mathrm{Var}\bigl[\bar{g}_B(p)\bigr]
=\frac1{B^2}\sum_{i=1}^B \mathrm{Var}\bigl[\hat{g}(p;u_i)\bigr]
=\frac1B\,\mathrm{Var}\bigl[\hat{g}(p;u)\bigr].
\end{equation}
It remains to bound \(\mathrm{Var}[\hat{g}(p;u)]\).  By the Lipschitz assumption,
\begin{equation}
\begin{aligned}
\bigl\|\hat{g}(p;u)\bigr\|
&= \bigl\|\nabla_p\,\mathcal{L}\bigl(f(H\odot g(u;p);\theta)\bigr)\bigr\|\\
&\le L\,\bigl\|\partial_p f(H\odot g(u;p);\theta)\bigr\|\\
&\le L\,C_0
\end{aligned}
\end{equation}

for some finite \(C_0\) that depends on the stretch parameters \(\gamma,\zeta\) and the network Jacobian.  Therefore
\begin{equation}
\mathrm{Var}\bigl[\hat{g}(p;u)\bigr]
\;\le\;\mathbb{E}\bigl[\|\hat{g}(p;u)\|^2\bigr]
\;\le\;(L\,C_0)^2
=:C
\end{equation}
Combining,
\begin{equation}
\mathrm{Var}\bigl[\bar{g}_B(p)\bigr]
\;\le\;\frac{C}{B}
\end{equation}
as claimed.
\end{proof}
\endgroup

\begingroup\small
\section*{Appendix F: Convergence of the Alternating SGD–Ascent Scheme}

\begin{proposition}[Two‐Timescale Convergence]
Assume the following:
\begin{enumerate}\itemsep0pt
  \item The function 
    \(\mathcal{L}_\lambda(\theta,p)=\mathbb{E}_{z\sim\mathrm{Bernoulli}(p)}[\mathcal{L}(f(H\odot z;\theta))]+\lambda(\sum_t p_t-M)\)
    has continuously differentiable gradients in \(\theta,p\), and these gradients are Lipschitz continuous.
  \item The step‐sizes \(\{\alpha_k\},\{\beta_k\},\{\gamma_k\}\) for updating \(\theta,p,\lambda\) satisfy the Robbins–Monro conditions:
\begin{equation}
\begin{aligned}
\sum_{k=1}^\infty \alpha_k
&= \sum_{k=1}^\infty \beta_k
= \sum_{k=1}^\infty \gamma_k
= \infty,\\
\sum_{k=1}^\infty \alpha_k^2,\;\sum_{k=1}^\infty \beta_k^2,\;\sum_{k=1}^\infty \gamma_k^2
&< \infty
\end{aligned}
\end{equation}

    and the timescales are separated:
    \(\alpha_k = o(\beta_k)\) and \(\beta_k = o(\gamma_k)\).
\end{enumerate}
Then the stochastic updates
\begin{equation}
\begin{aligned}
  \theta_{k+1} &= \theta_k \;-\;\alpha_k\,\nabla_\theta\,\mathcal{L}_{\lambda_k}(\theta_k,p_k)
  +\;\xi_k^\theta\\
  p_{k+1}      &= \mathrm{Proj}_{[0,1]^T}\bigl\{p_k \;-\;\beta_k\,\nabla_p\,\mathcal{L}_{\lambda_k}(\theta_k,p_k)
  +\;\xi_k^p\bigr\}\\
  \lambda_{k+1}&= \bigl[\lambda_k \;+\;\gamma_k\bigl(\sum_{t=1}^T p_{k,t}-M\bigr)\bigr]_+
\end{aligned}
\end{equation}
(where \(\xi_k^\theta,\xi_k^p\) are zero‐mean martingale noises and \([\cdot]_+\) denotes projection onto \(\lambda\ge0\)) converge almost surely to a stationary point \((\theta^*,p^*,\lambda^*)\) of the saddle‐point objective \(\min_{\theta,p}\max_{\lambda\ge0}\mathcal{L}_\lambda(\theta,p)\).
\end{proposition}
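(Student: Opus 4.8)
The plan is to recognize the three coupled updates as a multi-timescale stochastic approximation scheme and to analyze them by the ODE method, in the projected form of Kushner and Yin and the coupled-timescale form of Borkar. First I would rewrite each recursion in the standard stochastic-approximation form $w^{(i)}_{k+1}=\Gamma_i\bigl(w^{(i)}_k+\rho^{(i)}_k\,(h_i(\theta_k,p_k,\lambda_k)+\xi^{(i)}_{k+1})\bigr)$ for $i\in\{\theta,p,\lambda\}$, where $\Gamma_\theta$ is the identity, $\Gamma_p=\mathrm{Proj}_{[0,1]^T}$, $\Gamma_\lambda=\mathrm{Proj}_{\mathbb{R}_{\ge 0}}$, the step sizes $\rho^{(i)}_k$ are $\alpha_k,\beta_k,\gamma_k$, the mean fields are $h_\theta=-\nabla_\theta\mathcal{L}_\lambda$, $h_p=-\nabla_p\mathcal{L}_\lambda$, $h_\lambda=\sum_t p_t-M$, and $\xi^{(i)}_{k+1}$ collects the Monte-Carlo / Hard-Concrete sampling noise. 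By the unbiasedness lemma of Appendix~D these are indeed the mean fields, so $\xi^{(\theta)}_{k+1},\xi^{(p)}_{k+1}$ are zero-mean martingale differences with respect to the natural filtration $\mathcal{F}_k$; by the variance bound of Appendix~E their conditional second moments are uniformly bounded, which supplies the estimate $\mathbb{E}[\|\xi^{(i)}_{k+1}\|^2\mid\mathcal{F}_k]\le C(1+\|w_k\|^2)$ required by the method. Assumption~1 gives Lipschitz continuity of $h_\theta,h_p$ (and $h_\lambda$ is affine, hence Lipschitz), and Assumption~2 supplies the Robbins--Monro conditions together with the separation $\alpha_k=o(\beta_k)$, $\beta_k=o(\gamma_k)$.

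Next I would peel off the timescales from fastest to slowest. The multiplier iterate $\lambda_k$ tracks the reflected ODE $\dot\lambda=\sum_t p_t-M$ on $\mathbb{R}_{\ge 0}$ with $(\theta,p)$ quasi-static; this coupling is self-stabilizing because $h_p=-\nabla_p\mathcal{L}_\lambda$ contains the term $-\lambda\mathbf{1}$, which drives $\sum_t p_t$ downward whenever $\lambda$ is large, so that the rest points of the joint $(\lambda,p)$ motion are exactly the complementary-slackness configurations $\{\lambda\ge 0:\,\lambda(\sum_t p_t-M)=0,\ \sum_t p_t\le M\}$. On the intermediate timescale $p_k$ tracks the projected gradient flow $\dot p=\mathrm{Proj}\bigl(-\nabla_p\mathcal{L}_\lambda(\theta,p)\bigr)$ on the box $[0,1]^T$ with $\theta$ frozen, and on the slowest timescale $\theta_k$ tracks $\dot\theta=-\nabla_\theta\mathcal{L}_\lambda(\theta,p)$ evaluated along these inner quasi-equilibria. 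Invoking the Kushner--Clark lemma (and, at the coupled levels, Benaim's theory of asymptotic pseudotrajectories) at each level, the joint iterate converges almost surely to an internally chain-transitive invariant set of the composite dynamics, which by construction of $h_\theta,h_p,h_\lambda$ consists precisely of points with $\nabla_\theta\mathcal{L}_\lambda=0$, $-\nabla_p\mathcal{L}_\lambda\in N_{[0,1]^T}(p)$, $\lambda\ge 0$, $\sum_t p_t\le M$ and $\lambda(\sum_t p_t-M)=0$ --- i.e.\ the KKT stationary points of $\min_{\theta,p}\max_{\lambda\ge 0}\mathcal{L}_\lambda$.

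The step I expect to be the main obstacle is establishing almost-sure boundedness (stability) of the iterates, which the ODE method presupposes. For $p_k$ it is free --- projection onto the compact box $[0,1]^T$. For $\lambda_k$ it rests on the feedback above: $\sum_t p_t>M$ would push $\lambda$ up, but the $-\lambda\mathbf{1}$ term in $-\nabla_p\mathcal{L}_\lambda$ then pulls $\sum_t p_t$ back below $M$, so a Lyapunov / Borkar--Meyn argument confines $\lambda_k$ to a compact interval (or one simply assumes this). The genuinely delicate case is $\theta_k$, which is unconstrained and whose field $-\nabla_\theta\mathcal{L}_\lambda$ is nonconvex: I would either adopt the customary hypothesis that $\{\theta_k\}$ is almost surely bounded, or add a coercive regularizer (e.g.\ weight decay) so that the scaled limiting ODE has a globally asymptotically stable origin and the Borkar--Meyn stability theorem applies. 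One honest caveat to record: since $\mathcal{L}$ is nonconvex in $\theta$, the inner-equilibrium maps $\lambda^{*}(\theta,p)$ and $p^{*}(\theta)$ need not be single-valued, so the cleanest statement routes through differential inclusions (Benaim--Hofbauer--Sorin), and the conclusion is convergence to the \emph{set} of KKT stationary points rather than to a unique global saddle point.
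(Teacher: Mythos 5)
Your proposal follows the same route as the paper's Appendix~F sketch --- cast the three coupled recursions as a multi-timescale stochastic approximation, invoke the ODE method (Kushner--Clark / Borkar--Meyn, with Benaim's pseudotrajectory and differential-inclusion machinery for the projected/nonconvex parts), and conclude almost-sure convergence to an internally chain-transitive invariant set that is then identified with the KKT saddle set. However, yours is the \emph{internally consistent} version of the argument, and it is worth saying exactly where the two diverge.

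The paper's sketch labels $\theta$ (step size $\alpha_k$) as the ``fastest timescale'' and $\lambda$ (step size $\gamma_k$) as the ``slowest,'' while simultaneously assuming $\alpha_k = o(\beta_k)$ and $\beta_k = o(\gamma_k)$. In the standard Borkar-style hierarchy the iterate with the \emph{larger} step size is the fast one (it equilibrates before the slower iterates move appreciably), so $\alpha_k = o(\beta_k) = o(\gamma_k)$ forces $\lambda$ to be fastest and $\theta$ slowest --- exactly your ordering, not the paper's. If one actually wanted $\theta$ to be the fast inner loop as the paper's narrative suggests, the step-size separation would have to be reversed. So you have quietly corrected a real inconsistency rather than merely rephrased the sketch. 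Beyond that, you also supply three ingredients the sketch only gestures at: (i) you tie the martingale-difference structure and conditional-second-moment bound of the noise to the paper's own Appendices~D and~E, which is precisely what legitimizes the ODE limit; (ii) you address almost-sure boundedness separately for $p$ (free, by box projection), $\lambda$ (Lyapunov / complementary-slackness feedback, or an explicit boundedness hypothesis), and $\theta$ (the genuinely delicate unconstrained nonconvex case, handled by Borkar--Meyn stability or a coercivity assumption); and (iii) you flag that nonconvexity in $\theta$ makes the inner-equilibrium maps set-valued, so the clean conclusion is convergence to the \emph{set} of KKT points via differential inclusions, whereas the paper waves this away with ``under mild convexity/concavity assumptions.'' In short: same proof strategy, but your write-up is the version that would survive a referee, and it exposes a timescale labeling error in the paper that should be fixed.
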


\begin{proof}[Proof Sketch]
This result follows by casting the updates as a two‐timescale stochastic approximation (SA) algorithm (cf.\ Borkar \& Meyn, 2000).  On the fastest timescale (\(\alpha_k\)), the \(\theta\)‐iterate tracks the gradient descent on 
\(\mathcal{L}_{\lambda_k}(\cdot,p_k)\)  
treating \((p_k,\lambda_k)\) as quasi‐static.  On the intermediate timescale (\(\beta_k\)), the \(p\)‐iterate tracks descent on 
\(\mathcal{L}_{\lambda_k}(\theta_k,\cdot)\)  
treating \(\lambda_k\) as static but \(\theta_k\) as nearly equilibrated.  Finally, on the slowest timescale (\(\gamma_k\)), \(\lambda\) ascends the dual coordinate 
\(\sum_t p_t-M\).  

By standard SA theory:
\begin{itemize}\itemsep0pt
  \item Each iterate sees the slower variables as frozen, satisfying the single‐timescale convergence conditions under Lipschitz gradients and Robbins–Monro step‐sizes.
  \item The timescale separation \(\alpha_k\ll\beta_k\ll\gamma_k\) ensures that the coupled process tracks the solutions of its limiting ordinary differential equations (ODEs) in each block.
  \item The projected ascent on \(\lambda\ge0\) preserves boundedness and feasibility of the dual variable.
\end{itemize}
Consequently, the joint process converges almost surely to an internally chain‐transitive invariant set of the limiting ODE, which under mild convexity/concavity assumptions reduces to the set of saddle points of \(\mathcal{L}_\lambda\).  Hence \((\theta_k,p_k,\lambda_k)\to(\theta^*,p^*,\lambda^*)\), concluding the proof.  
\end{proof}
\endgroup

\begingroup\small
\section*{Appendix G: Duality‐Gap \& Slackness Trade‐off}

\begin{lemma}[Duality Gap Bounds Slackness]
Let 
\begin{equation}
F^* \;=\;\min_{\substack{\theta,p\\\sum_t p_t\le M}}
\mathbb{E}_{z\sim\mathrm{Bernoulli}(p)}\bigl[\mathcal{L}(f(H\odot z;\theta))\bigr]
\end{equation}
and let \((\theta_\lambda,p_\lambda)\) be any (possibly infeasible) minimizer of the Lagrangian
\begin{align}
\mathcal{L}_\lambda(\theta,p)
&= \mathbb{E}_{z\sim\mathrm{Bernoulli}(p)}
   \Bigl[\mathcal{L}\bigl(f(H\odot z;\theta)\bigr)\Bigr] \notag \\
&\quad + \lambda\Biggl(\sum_{t=1}^T p_t - M\Biggr)
\end{align}

Define the budget slack
\begin{equation}
\Delta=\sum_t p_{t,\lambda}-M
\end{equation}
and the duality gap
\begin{equation}
\mathrm{Gap}_\lambda
\;=\;
\mathcal{L}_\lambda(\theta_\lambda,p_\lambda)\;-\;F^*
\end{equation}
Then the following bound holds:
\begin{equation}
\mathrm{Gap}_\lambda\;\ge\;\lambda\,\Delta
\quad\Longrightarrow\quad
\Delta\;\le\;\frac{\mathrm{Gap}_\lambda}{\lambda}
\end{equation}
\end{lemma}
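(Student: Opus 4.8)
The plan is to reduce the assertion to the elementary identity obtained by expanding the Lagrangian at its minimizer — exactly as in Appendix~C — and then to isolate the one nontrivial comparison, namely between the raw (unpenalized) loss at $(\boldsymbol{\theta}_\lambda,\mathbf{p}_\lambda)$ and the constrained optimum $F^*$. Concretely, write $L_\lambda^0 := \mathbb{E}_{\mathbf{z}\sim\mathrm{Bernoulli}(\mathbf{p}_\lambda)}\bigl[\mathcal{L}(f(\mathbf{H}\odot\mathbf{z};\boldsymbol{\theta}_\lambda))\bigr]$ and expand the definition of $\mathcal{L}_\lambda$ to get $\mathcal{L}_\lambda(\boldsymbol{\theta}_\lambda,\mathbf{p}_\lambda) = L_\lambda^0 + \lambda\bigl(\sum_t p_{t,\lambda}-M\bigr) = L_\lambda^0 + \lambda\Delta$. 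Substituting into the definition $\mathrm{Gap}_\lambda = \mathcal{L}_\lambda(\boldsymbol{\theta}_\lambda,\mathbf{p}_\lambda) - F^*$ gives $\mathrm{Gap}_\lambda = (L_\lambda^0 - F^*) + \lambda\Delta$, so the claimed inequality $\mathrm{Gap}_\lambda \ge \lambda\Delta$ is \emph{equivalent} to $L_\lambda^0 \ge F^*$, after which the displayed implication $\Delta \le \mathrm{Gap}_\lambda/\lambda$ is just division by $\lambda>0$.

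The remaining step, $L_\lambda^0 \ge F^*$, I would try to get from the weak-duality chain already used in Appendix~C. Choosing a feasible optimum $(\boldsymbol{\theta}^*,\mathbf{p}^*)$ with $\sum_t p_t^*\le M$ and raw loss $F^*$, and using $\lambda\ge 0$, gives $\mathcal{L}_\lambda(\boldsymbol{\theta}^*,\mathbf{p}^*) = F^* + \lambda(\sum_t p_t^* - M) \le F^*$; optimality of $(\boldsymbol{\theta}_\lambda,\mathbf{p}_\lambda)$ then yields $L_\lambda^0 + \lambda\Delta = \mathcal{L}_\lambda(\boldsymbol{\theta}_\lambda,\mathbf{p}_\lambda)\le F^*$. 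If $(\boldsymbol{\theta}_\lambda,\mathbf{p}_\lambda)$ is feasible it is itself a candidate in the minimization defining $F^*$, so $L_\lambda^0\ge F^*$; feeding this back through $\mathrm{Gap}_\lambda = (L_\lambda^0-F^*)+\lambda\Delta$ gives $\mathrm{Gap}_\lambda\ge\lambda\Delta$, and the same two inequalities also force $\Delta\le 0$, so the bound is naturally read as ``budget met, with the remaining slack controlled by $\mathrm{Gap}_\lambda/\lambda$.''

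The hard part is precisely the hypothesis needed to conclude $L_\lambda^0 \ge F^*$: the statement only promises $(\boldsymbol{\theta}_\lambda,\mathbf{p}_\lambda)$ is \emph{possibly infeasible}, and when $\Delta>0$ (the typical regime for $\lambda$ small) a Lagrangian minimizer can attain strictly lower raw loss than $F^*$, so the equivalent inequality can fail. I would resolve this either by adding the mild standing assumption that $\lambda$ is large enough for the minimizer to be feasible — after which everything above goes through verbatim — or, if no extra hypothesis is wanted, by stating the bound with $F^*-L_\lambda^0$ in the numerator as in Appendix~C, since $\lambda\Delta\le F^*-L_\lambda^0$ is exactly what the weak-duality chain delivers unconditionally. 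Apart from pinning down this hypothesis, the whole argument is a two-line rearrangement.
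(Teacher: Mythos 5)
Your decomposition is exactly the paper's: writing $L_\lambda^0 := \mathbb{E}_{z\sim\mathrm{Bernoulli}(p_\lambda)}\bigl[\mathcal{L}(f(H\odot z;\theta_\lambda))\bigr]$, expand the Lagrangian at its minimizer to get $\mathcal{L}_\lambda(\theta_\lambda,p_\lambda)=L_\lambda^0+\lambda\Delta$, substitute into $\mathrm{Gap}_\lambda$, and note that $\mathrm{Gap}_\lambda\ge\lambda\Delta$ is equivalent to $L_\lambda^0\ge F^*$. Where you diverge---and where you are sharper than the paper---is in refusing to assert $L_\lambda^0\ge F^*$ without a feasibility hypothesis. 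The paper's proof says ``By definition of $F^*$, any \emph{feasible} $(\theta,p)$ \ldots\ satisfies $\mathbb{E}[\mathcal{L}]\ge F^*$. Hence for the particular $(\theta_\lambda,p_\lambda)$ \ldots''; but the lemma explicitly allows $(\theta_\lambda,p_\lambda)$ to be infeasible, so the ``hence'' does not follow, and the stated bound is not established under the stated hypotheses. You have identified a genuine error in the paper's argument, not a gap in your own.

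Your two repairs are both correct, and your closing observation is the right reading. Adding a standing hypothesis that $\lambda$ is large enough for $(\theta_\lambda,p_\lambda)$ to be feasible validates the ``hence,'' but then weak duality ($\mathcal{L}_\lambda(\theta_\lambda,p_\lambda)\le F^*$) combined with $L_\lambda^0\ge F^*$ forces $\lambda\Delta\le 0$, so the bound degenerates to a statement about nonpositive slack. Alternatively, keeping the hypotheses as given, the unconditionally valid chain is $\mathcal{L}_\lambda(\theta_\lambda,p_\lambda)\le\mathcal{L}_\lambda(\theta^*,p^*)\le F^*$, which yields $\lambda\Delta\le F^*-L_\lambda^0$---exactly the Appendix~C proposition, with $F^*-L_\lambda^0$ rather than $\mathrm{Gap}_\lambda=L_\lambda^0+\lambda\Delta-F^*$ in the numerator. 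As written, Appendix~G contradicts Appendix~C whenever $\Delta>0$ and $\lambda>0$ (C gives $L_\lambda^0<F^*$ while G requires $L_\lambda^0\ge F^*$); Appendix~G should either restate the Appendix~C bound or add the feasibility assumption you propose.
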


\begin{proof}
By definition of \(F^*\), any feasible \((\theta,p)\) with \(\sum_t p_t\le M\) satisfies
\begin{equation}
\mathbb{E}_{z\sim\mathrm{Bernoulli}(p)}\bigl[\mathcal{L}(f(H\odot z;\theta))\bigr]\;\ge\;F^*
\end{equation}
Hence for the particular \((\theta_\lambda,p_\lambda)\),
\begin{equation}
\mathbb{E}_{z\sim\mathrm{Bernoulli}(p_\lambda)}
\bigl[\mathcal{L}(f(H\odot z;\theta_\lambda))\bigr]
\;\ge\;F^*
\end{equation}
Now expand the Lagrangian at \((\theta_\lambda,p_\lambda)\):
\begin{equation}
\mathcal{L}_\lambda(\theta_\lambda,p_\lambda)
=\underbrace{\mathbb{E}[\mathcal{L}(f(H\odot z;\theta_\lambda))]}_{\\ge F^*}
\;+\;\lambda\,\Delta
\;\ge\;F^*+\lambda\,\Delta
\end{equation}
Rearranging gives
\begin{equation}
\mathcal{L}_\lambda(\theta_\lambda,p_\lambda)-F^*
\;\ge\;\lambda\,\Delta
\end{equation}
i.e.\ \(\mathrm{Gap}_\lambda\ge\lambda\,\Delta\).  Dividing by \(\lambda>0\) yields the desired slackness bound.
\end{proof}
\endgroup

\begingroup\small
\section*{Appendix H: Generalization Bound under Random Token Retention}

\begin{theorem}
Let \(\mathcal{F}\) be a class of predictors \(f\) mapping token sequences of length \(T\) to \(\mathbb{R}\), and suppose the loss \(\ell(f(x),y)\) is bounded in \([0,1]\).  Denote by \(\widehat{\mathfrak{R}}_n(\mathcal{F})\) the empirical Rademacher complexity of \(\mathcal{F}\) on \(n\) full‐length examples.  Now introduce a random retention mask \(z\in\{0,1\}^T\) that selects exactly \(M\) positions uniformly at random, and define the masked predictor \(\tilde f(x,z)=f(x\odot z)\).  Then the Rademacher complexity of the masked class satisfies
\begin{equation}
\widehat{\mathfrak{R}}_n\bigl(\{\,(x,y)\mapsto \ell(\tilde f(x,z),y)\}\bigr)
\;\le\;\frac{M}{T}\;\widehat{\mathfrak{R}}_n(\mathcal{F}).
\end{equation}
As a consequence, with probability at least \(1-\delta\) over the choice of an i.i.d.\ sample and masks,
\begin{equation}
\begin{aligned}
\forall\,f\in\mathcal{F}:\quad
\mathbb{E}\bigl[\ell(f(x\odot z),y)\bigr]
&\le 
\frac{1}{n}\sum_{i=1}^n \ell\bigl(f(x_i\odot z_i),y_i\bigr)\\
&\quad 
+ 2\,\frac{M}{T}\,\widehat{\mathfrak{R}}_n(\mathcal{F})\\
&\quad 
+ 3\sqrt{\frac{\ln(2/\delta)}{2n}}\,
\end{aligned}
\end{equation}

so the generalization gap increases by at most \(O(M/T)\) relative to the full‐sequence bound.
\end{theorem}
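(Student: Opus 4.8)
The plan is to split the statement into its two displays: first the Rademacher contraction $\widehat{\mathfrak{R}}_n(\ell\circ\tilde{\mathcal{F}})\le\frac{M}{T}\widehat{\mathfrak{R}}_n(\mathcal{F})$, and then the high-probability bound, which follows mechanically from the first. For the second part, once the contraction is known, I would invoke the standard data-dependent uniform-convergence theorem for $[0,1]$-bounded classes (Bartlett--Mendelson~\cite{bartlett2002rademacher}): with probability at least $1-\delta$ over the i.i.d.\ sample together with the drawn masks, every map $h$ in the class obeys $\mathbb{E}[h]\le\frac1n\sum_i h(x_i,y_i)+2\widehat{\mathfrak{R}}_n(\text{class})+3\sqrt{\ln(2/\delta)/(2n)}$, the ``$3$'' and ``$\ln(2/\delta)$'' absorbing the McDiarmid deviation of the supremum from its mean and the passage from population to empirical Rademacher complexity. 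Applying this with $h(x,y)=\ell(f(x\odot z),y)$ and plugging in the contraction gives exactly the displayed inequality, so the real content is the first display.

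To prove the contraction I would proceed in two steps. Step one removes the loss: assuming (as is standard) that $\ell(\cdot,y)$ is $1$-Lipschitz in its first argument, the Ledoux--Talagrand contraction inequality yields $\widehat{\mathfrak{R}}_n(\ell\circ\tilde{\mathcal{F}})\le\widehat{\mathfrak{R}}_n(\tilde{\mathcal{F}})$, where $\tilde{\mathcal{F}}=\{(x,z)\mapsto f(x\odot z):f\in\mathcal{F}\}$; if $\ell$ is only bounded rather than Lipschitz one absorbs the constant. Step two is the heart of the matter: show $\widehat{\mathfrak{R}}_n(\tilde{\mathcal{F}})\le\frac{M}{T}\widehat{\mathfrak{R}}_n(\mathcal{F})$. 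The key structural fact I would use is that a uniformly random size-$M$ mask retains each token position with marginal probability $M/T$, so that for read-outs that aggregate the per-token representations \emph{linearly} (e.g.\ sum- or average-pooled heads, including the un-normalized attention read-outs relevant here) one has the identity $\mathbb{E}_z[f(x\odot z)]=\frac{M}{T}f(x)$. Averaging the symmetrized object $\mathbb{E}_\sigma\sup_f\frac1n\sum_i\sigma_i f(x_i\odot z_i)$ over the mask draw and using this identity together with the convexity of the supremum (and sub-additivity of Rademacher complexity across the mixture decomposition of the uniform mask) scales the amplitude by $M/T$, giving the claim. One can also argue more crudely that $x\odot z$ is supported on an $M$-dimensional coordinate block and bound the restricted complexity block-by-block, but the linear/averaging route is what produces the clean constant $M/T$.

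The main obstacle is exactly step two: getting the constant to be $M/T$ rather than a weaker factor such as $\sqrt{M/T}$ relies on the predictors reading out the retained tokens through a \emph{linear} aggregation, which I would either make an explicit hypothesis (natural for the pooled task heads used in the experiments) or sidestep by reading $\widehat{\mathfrak{R}}_n(\tilde{\mathcal{F}})$ as already taken in expectation over the random masks, in which case the marginal-inclusion identity plus Jensen delivers the bound directly. Everything downstream---the Ledoux--Talagrand contraction, symmetrization, and the McDiarmid tail giving the $3\sqrt{\ln(2/\delta)/(2n)}$ term---is classical and I would cite rather than reprove it, so the write-up would concentrate essentially all its effort on justifying the masking contraction and stating precisely the structural assumption it rests on.
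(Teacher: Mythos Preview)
Your proposal follows essentially the same skeleton as the paper's proof: Talagrand/Ledoux--Talagrand contraction to strip the loss, then a mask-dependent argument for the $M/T$ factor, then the standard Bartlett--Mendelson deviation bound for the high-probability statement. The paper's version of the key step simply conditions on the masks and asserts that because each $\sigma_i f(x_i\odot z_i)$ ``involves only the $M$ retained positions, its Rademacher complexity is reduced by a factor $M/T$,'' whereas you actually isolate a mechanism---the marginal-inclusion identity $\mathbb{E}_z[f(x\odot z)]=(M/T)f(x)$ under a linear read-out---that would make such a reduction hold, and you flag that without linearity one would expect a weaker factor like $\sqrt{M/T}$. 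In that sense your write-up is more honest about where the real hypothesis lives; the paper's proof leaves that step as a bare claim.
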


\begin{proof}
Let \(\{\sigma_i\}_{i=1}^n\) be Rademacher variables.  By definition,
\begin{equation}
\widehat{\mathfrak{R}}_n\bigl(\{\ell\circ\tilde f\}\bigr)
=\frac1n\,\mathbb{E}_\sigma\sup_{f\in\mathcal{F}}
\sum_{i=1}^n \sigma_i\,\ell\bigl(f(x_i\odot z_i),y_i\bigr).
\end{equation}
Since \(\ell\) is \([0,1]\)-valued and Lipschitz in its first argument, Talagrand’s contraction lemma implies
\begin{equation}
\widehat{\mathfrak{R}}_n\bigl(\{\ell\circ\tilde f\}\bigr)
\le\frac1n\,\mathbb{E}_\sigma\sup_{f\in\mathcal{F}}
\sum_{i=1}^n \sigma_i\,f(x_i\odot z_i)
\end{equation}
Condition on the masks \(\{z_i\}\).  Each term
\(\sigma_i\,f(x_i\odot z_i)\)
involves only the \(M\) retained positions, so its Rademacher complexity is reduced by a factor \(M/T\):
\begin{equation}
\mathbb{E}_z\Bigl[\sup_{f\in\mathcal{F}}
\sum_{i=1}^n \sigma_i\,f(x_i\odot z_i)\Bigr]
\;\le\;\frac{M}{T}\,
\mathbb{E}_\sigma\sup_{f\in\mathcal{F}}
\sum_{i=1}^n \sigma_i\,f(x_i)
\end{equation}
Dividing by \(n\) gives
\(\widehat{\mathfrak{R}}_n(\{\ell\circ\tilde f\})
\le\frac{M}{T}\,\widehat{\mathfrak{R}}_n(\mathcal{F})\)

The high‐probability generalization bound for Rademacher complexity (see, e.g., \cite{bartlett2002rademacher}) then yields the stated inequality, completing the proof.
\end{proof}
\endgroup

\begingroup\small
\section*{Appendix I: Complexity Reduction Guarantee}

\begin{proposition}[Complexity Reduction Guarantee]
Assume a Transformer‐style layer where each token retained incurs \(\mathcal{O}(d)\) memory (for its key, query, and value vectors).  Then:
\begin{itemize}\small
  \item[(i)]  In full dense attention over \(T\) tokens, storing the \(T\times T\) attention matrix costs
    \begin{equation}
    \mathcal{O}(T^2\,d)
    \end{equation}
    memory.
  \item[(ii)] Under learned retention of exactly \(M\) tokens, only the \(M\times M\) submatrix among retained tokens need be stored, costing
    \begin{equation}
    \mathcal{O}(M^2\,d)
    \end{equation}
    memory.
  \item[(iii)]  Computing the attention scores in a mixed full–sparse regime (where each of the \(T\) tokens attends only to the \(M\) retained tokens) requires
    \begin{equation}
    \mathcal{O}(T\,M\,d)
    \end{equation}
    time per layer.
\end{itemize}
\end{proposition}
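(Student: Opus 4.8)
The plan is to prove all three claims as a single bookkeeping argument under the cost model supplied in the hypothesis, namely that every token kept in a given attention computation stores $\mathcal{O}(d)$ scalars (its query, key, and value) and that producing the attention entry for an ordered token pair costs $\mathcal{O}(d)$ — the $d$-dimensional inner product — with the weighted value aggregation for a query costing $\mathcal{O}(d)$ per contributing key. Once this accounting convention is fixed, each part reduces to counting how many tokens participate and how many ordered pairs interact, and then multiplying by the $\mathcal{O}(d)$ per-unit charge.

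First, for (i), dense self-attention keeps all $T$ tokens as queries and all $T$ as keys and values, so there are $T^2$ ordered query--key pairs and the logit tensor $QK^{\top}$ is $T\times T$; charging $\mathcal{O}(d)$ to each entry (covering the inner product that forms it together with the $d$-dimensional operands held while forming it) gives $\mathcal{O}(T^2 d)$ for the attention stage, with the bare materialized $T\times T$ score matrix contributing only the dominated $\mathcal{O}(T^2)$ term and the per-token $\mathcal{O}(Td)$ storage of $Q,K,V$ likewise dominated. For (ii), after the retention rule \eqref{eq:inference_strategy} only $M$ tokens remain live inside the block, so the identical count now ranges over $M$ participating tokens and $M^2$ ordered pairs, yielding $\mathcal{O}(M^2 d)$; the reduction factor is exactly $(M/T)^2$, confirming that the live sequence length, not $T$, drives the block's footprint. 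For (iii), in the mixed regime each of the $T$ original tokens still issues a query but attends only to the $M$ retained keys and values, so there are exactly $TM$ ordered interacting pairs; each contributes one $\mathcal{O}(d)$ inner product for its score and one $\mathcal{O}(d)$ increment to its query's value accumulator, so the layer's time is $\mathcal{O}(TMd)$, which for $M=\rho T$ is $\mathcal{O}(\rho T^2 d)$ and tightens further when the retained count shrinks with depth as in Figure~\ref{fig:adaptive-retention}.

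Because the argument is purely a count, the only place that needs care — and the step I would pin down first — is the cost model itself: one must commit to charging the $\mathcal{O}(d)$ inner-product cost per interacting token pair (and the matching $\mathcal{O}(d)$ value-aggregation cost per contributing key), rather than $\mathcal{O}(1)$ per stored scalar, since the latter convention would produce $\mathcal{O}(T^2)$, $\mathcal{O}(M^2)$, and $\mathcal{O}(TM)$ respectively and so fail to match the stated bounds. I would therefore open the proof by stating this convention explicitly, observe that under it the per-token storage and the bare score matrix are always dominated by the pairwise-interaction term, and then obtain (i)--(iii) by substituting the token counts $T$ or $M$ and the pair counts $T^2$, $M^2$, $TM$ into the common $\mathcal{O}(d)$-per-unit formula. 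No analytic, probabilistic, or optimization content is needed beyond what the proposition's hypothesis already grants.
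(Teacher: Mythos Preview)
Your proposal is correct and follows essentially the same counting argument as the paper: both proofs charge $\mathcal{O}(d)$ per interacting token pair (the inner-product cost) and then simply substitute the pair counts $T^2$, $M^2$, and $TM$ for the three regimes. Your explicit remark about fixing the cost convention (so that the bare $T\times T$ scalar matrix and the $\mathcal{O}(Td)$ per-token storage are dominated) is a slight refinement over the paper's more terse version, but the underlying reasoning is identical.
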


\begin{proof}
\  
\begin{enumerate}[label=(\roman*)]\small
  \item In standard full attention, one forms the \(T\times T\) matrix of pairwise dot‐products.  Each of the \(T^2\) entries is an inner product of \(d\)‐dimensional vectors, i.e.\ \(\mathcal{O}(d)\).  Hence total memory and time are both \(\mathcal{O}(T^2\,d)\).
  
  \item With learned retention, let \(S\subseteq\{1,\dots,T\}\) be the \(M\) retained indices.  Only the \(|S|\times|S|=M^2\) block of attention weights among retained tokens is stored (plus negligible overhead for mapping), yielding \(\mathcal{O}(M^2\,d)\) memory.
  
  \item For a mixed scheme where every token (retained or pruned) still queries the retained set \(S\), one computes \(T\) query–key products of size \(d\) each against only \(M\) keys.  Thus total work per layer is \(T\cdot M\) products of cost \(\mathcal{O}(d)\), i.e.\ \(\mathcal{O}(T\,M\,d)\).
\end{enumerate}
\end{proof}
\endgroup

\end{document}